\def\BibTeX{{\rm B\kern-.05em{\sc i\kern-.025em b}\kern-.08em
    T\kern-.1667em\lower.7ex\hbox{E}\kern-.125emX}}
\newtheorem{theorem}{\textbf{Theorem}}
\newtheorem{proof}{\textbf{Proof}}
\title{BEVUDA++: Geometric-aware Unsupervised Domain Adaptation for Multi-View 3D \\Object Detection}
\begin{document}

\author{Rongyu Zhang,~\IEEEmembership{Student Member,~IEEE,} Jiaming Liu,~\IEEEmembership{Student Member,~IEEE,} Xiaoqi Li, Xiaowei Chi, \\ Dan Wang,~\IEEEmembership{Senior Member,~IEEE,} Li Du~\IEEEmembership{Member,~IEEE,} \\Yuan Du \Letter,~\IEEEmembership{Senior Member,~IEEE,}  Shanghang Zhang \Letter ,~\IEEEmembership{Member,~IEEE}

\thanks{Copyright © 2024 IEEE. Personal use of this material is permitted. However, permission to use this material for any other purposes must be obtained from the IEEE by sending an email to pubs-permissions@ieee.org.}
\thanks{This paper is an extended version of~\cite{liu2024bevuda}, in Proceedings of the 2024 IEEE International Conference on
Robotics and Automation.}
\thanks{Corresponding authors \Letter:  Yuan Du, and Shanghang Zhang. Project leader: Jiaming Liu}
\thanks{Rongyu Zhang is with the National Key Laboratory for Multimedia Information Processing in School of Computer Science at Peking
University. He is also the dual Ph.D. student with both Nanjing University and The Hong Kong Polytechnic University. (e-mail: rongyuzhang@smail.nju.edu.cn)}
\thanks{Xiaowei Chi is with The Hong Kong University of Science and Technology. (e-mail: xiaowei.chi@connect.ust.hk)}
\thanks{Yuan Du and Li Du are with Nanjing University. (email: \{yuandu, ldu\}@nju.edu.cn)}
\thanks{Dan Wang is with the Department of Computing, The Hong Kong Polytechnic University, Hong Kong. (e-mail: csdwang@comp.polyu.edu.hk)}
\thanks{Jiaming Liu, Xiaoqi Li, Shanghang Zhang are with the National Key Laboratory for Multimedia Information Processing in School of Computer Science at Peking
University. (e-mail: \{liujiaming,clorisli\}@stu.pku.edu.cn; shanghang@pku.edu.cn)}
}

\markboth{IEEE Transactions on Circuits and Systems for Video Technology}%
{}

\maketitle
\begin{abstract}
Vision-centric Bird's Eye View (BEV) perception holds considerable promise for autonomous driving. Recent studies have prioritized efficiency or accuracy enhancements, yet the issue of domain shift has been overlooked, leading to substantial performance degradation upon transfer. We identify major domain gaps in real-world cross-domain scenarios and initiate the first effort to address the Domain Adaptation (DA) challenge in multi-view 3D object detection for BEV perception. Given the complexity of BEV perception approaches with their multiple components, domain shift accumulation across multi-geometric spaces (e.g., 2D, 3D Voxel, BEV) poses a significant challenge for BEV domain adaptation. In this paper, we introduce an innovative geometric-aware teacher-student framework, BEVUDA++, to diminish this issue, comprising a Reliable Depth Teacher (RDT) and a Geometric Consistent Student (GCS) model. Specifically, RDT effectively blends target LiDAR with dependable depth predictions to generate depth-aware information based on uncertainty estimation, enhancing the extraction of Voxel and BEV features that are essential for understanding the target domain. To collaboratively reduce the domain shift, GCS maps features from multiple spaces into a unified geometric embedding space, thereby narrowing the gap in data distribution between the two domains. Additionally, we introduce a novel Uncertainty-guided Exponential Moving Average (UEMA) to further reduce error accumulation due to domain shifts informed by previously obtained uncertainty guidance. To demonstrate the superiority of our proposed method, we execute comprehensive experiments in four cross-domain scenarios, securing state-of-the-art performance in BEV 3D object detection tasks, e.g., 12.9\% NDS and 9.5\% mAP enhancement on Day-Night adaptation.
\end{abstract}

\section{Introduction}
The camera-based 3D object detection has attracted increasing attention in recent years, especially in the field of autonomous driving~\cite{arnold2019survey, chen2017multi, chen2016monocular,schrum2024maveric}. Nowadays, it has obtained obvious advancements driven by Bird-Eye-View (BEV) perception methods~\cite{wang2024bevrefiner, huang2021bevdet, li2022bevdepth,wang2024towards,qiao2024local,fan2023hcpvf} and large scale labeled autonomous driving datasets~\cite{geiger2012we, caesar2020nuScenes, sun2020scalability}. However, due to the vast variety of perception scenes in the real-world application~\cite{zhang2024decomposing, wang2021exploring, li2022unsupervised}, the camera-based methods can suffer significant performance degradation caused by different data distribution.

Recently, though mono-view 3D detection methods~\cite{rao2023monocular, li2022towards, li2022unsupervised} carry out studies of different camera parameters or annotation methods variation, domain adaptation (DA) problems on many real-world scenarios is still unexplored in both Mono-view~\cite{cai2020monocular, wang2021fcos3d, brazil2019m3d, ding2020learning, li2022diversity} and Multi-view\cite{wang2022detr3d, liu2022petr, liu2022petrv2, jiang2022polarformer, li2022bevformer, fang2024cross, huang2021bevdet, li2022unifying, li2022bevdepth}. Meanwhile, we discover the tremendous domain gap of the LSS-based BEV method in typical real-world cross-domain scenarios as shown in Fig. \ref{fig:intro}, which leads to inferior performance of baseline~\cite{li2022bevdepth}. Additionally, labeling target data is often impractical in real-world DA applications. Consequently, we are exploring unsupervised domain adaptation (UDA) for BEV perception.

Since Multi-view LSS-based methods~\cite{philion2020lift, huang2021bevdet, li2022bevdepth} are usually complicated and contain several components, the domain shift error accumulation on multi-geometric spaces makes BEV-oriented UDA extremely challenging: 
(1) \textit{2D images geometric space.} Since multi-view images contain abundant semantic information, it will result in a manifold domain shift when data distribution is transferred.
(2) \textit{3D Voxel geometric space.}
Voxel features that are constructed by source domain-specific image features and unreliable depth prediction will assemble more domain shifts on the target domain. 
(3) \textit{BEV geometric space.}
Due to the shift in the above spaces, the further constructed BEV feature results in an accumulation of domain shift and leads to noises for final prediction. 

To this end, we propose a BEV-oriented geometric-aware teacher-student framework, BEVUDA++, to disentangle accumulated domain shift problems, which consists of a Reliable Depth Teacher (RDT) and a Geometric Consistent Student (GCS) model. 
Since the BEV feature heavily relies on the reliability of depth~\cite{li2022bevdepth, li2022bevstereo}, RDT ingeniously fuses target LiDAR data with trustworthy depth estimations to integrate depth-aware insights. Notably, dependable depth predictions, filtered through an uncertainty estimation scheme~\cite{gal2016dropout,zhang2024efficient,zhang2024vecaf}, augment sparse LiDAR inputs and demonstrate robustness against domain shift.
Furthermore, given that features from correlated multi-space exhibit geometric consistency, we introduce GCS to collaboratively confront the compounded domain shift. In particular, GCS projects features from multiple latent spaces into a common geometric embedding space, thereby reducing the data distribution disparity between the source and target domains.
Finally, to fortify the linkage between RDT and GCS while enhancing BEVUDA++'s resilience against domain shift, we propose the Uncertainty-guided Exponential Moving Average (UEMA) strategy. This approach introduces dynamic updates as a superior alternative to conventional static model adjustments, which not only expedites the model's adjustment to new data distributions but also significantly enhances the precision of pseudo-labels.

\begin{figure}[t]
\includegraphics[width=0.48\textwidth]{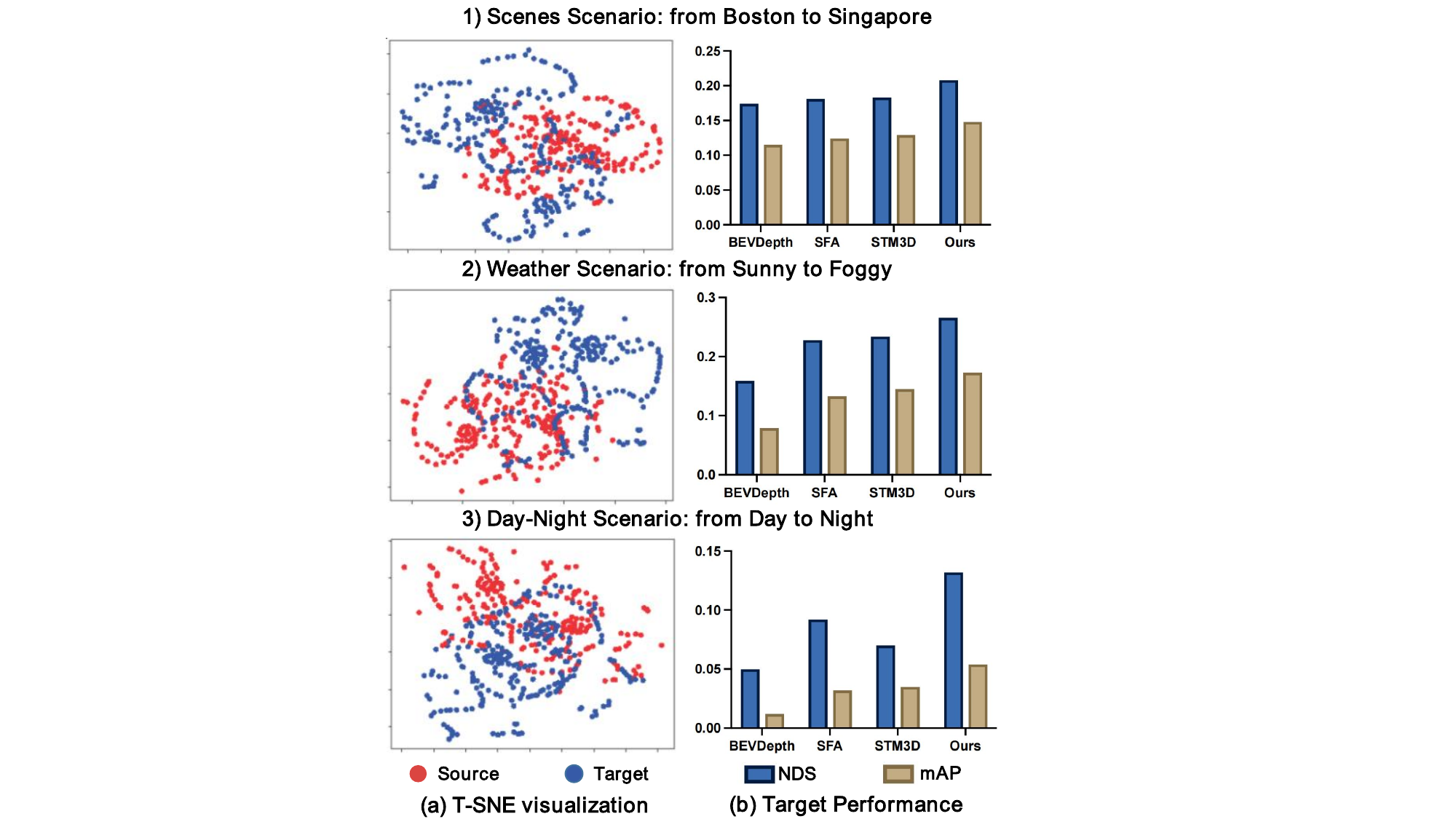}
\centering
\caption{(a) T-SNE~\cite{van2013barnes} visualization of the BEV features in different UDA scenarios, in which features are obviously separated by domain. (b) The performance of our method compared with previous works~\cite{li2022bevdepth,wang2021exploring,li2022unsupervised}. All methods are built on BevDepth with a ResNet-50 backbone.}
\label{fig:intro}
\end{figure}

We further design three classical and one continual changing UDA scenarios, which are \textbf{Scene} (from Boston to Singapore), \textbf{Weather} (from sunny to rainy and foggy), \textbf{Day-night}, and \textbf{Foggy degree} changing in~\cite{caesar2020nuScenes}. For continually changing scenarios, we construct cross-domain experiments with the continuously increased density of Fog, which gradually enlarges the domain gap. Our proposed method achieves competitive performance in all scenarios (shown in Fig. \ref{fig:intro} (b)). Compared with the previous state-of-the-art UDA method (i.e., STM3D\cite{li2022unsupervised}), it improves the NDS by 2.6\%, 3.1\%, 6.1\%, and 3.5\%, respectively, in the aforementioned four classical scenarios.
The main contributions of our proposed BEVUDA++ are summarized as follows:

\begin{itemize}
    \item We explore the unsupervised domain adaptation for BEV perception of multi-view 3D object detection. We propose a novel geometric-aware teacher-student framework, BEVUDA++, to ease the domain shift error accumulation.

    \item Within BEVUDA++, we introduce a Reliable Depth Teacher (RDT) model designed to thoroughly harness target domain insights through depth-aware information utilization and a Geometric Consistent Student (GCS) model that adeptly projects multi-latent space features into a unified geometric embedding space, effectively narrowing the data distribution gap across domains.

    \item We further devise an innovative Uncertainty-guided Exponential Moving Average (UEMA) mechanism to accelerate the model adaptation to new data distribution and sharpen pseudo-label accuracy.

\end{itemize}

\begin{figure*}[t]
\includegraphics[width=0.98\textwidth]{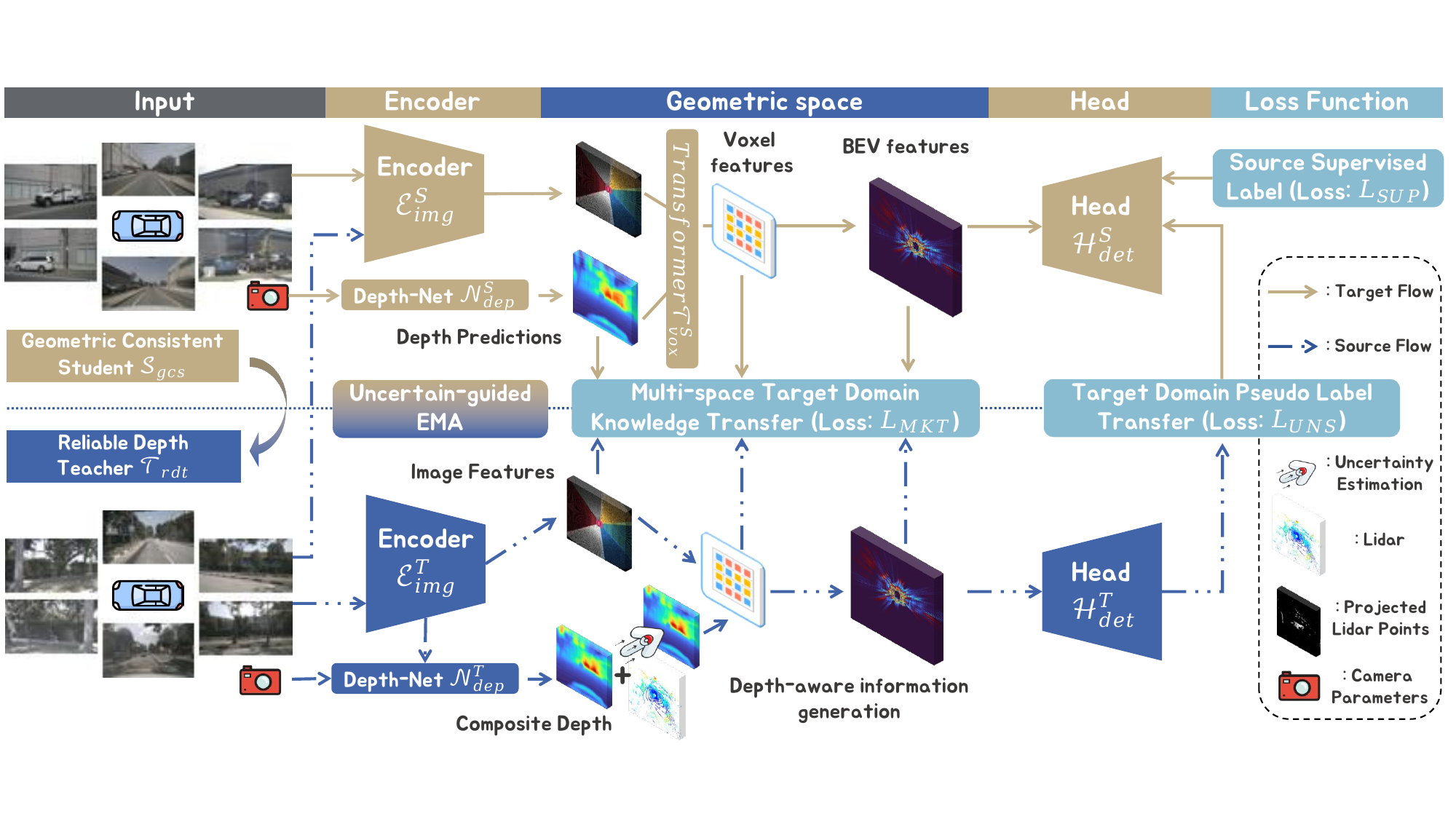}
\centering
\caption{The framework of geometric-aware teacher-student (BEVUDA++) is composed of the Reliable Depth Teacher (RDT) and Geometric Consistent Student (GCS) model. In \textbf{the bottom part}, the RDT model takes target domain input and adopts depth-aware information to construct Voxel and BEV features with sufficient target domain knowledge, which is further transferred to the student model along with detection pseudo labels. In \textbf{the upper part}, the GCS model takes two domains input and decreases 
data distribution distance in a shared geometric embedding space. \textbf{The update} of RDT is achieved by Uncertainty-guided EMA (UEMA). BEVUDA++ framework aims to comprehensively address the multi-geometric space domain shift problem.}
\label{fig:method}
\end{figure*}

\section{Related work}
\subsection{Camera-based 3D object detection.}
3D Object Detection has become a cornerstone in autonomous driving and machine scene comprehension, with two leading approaches: Single-view~\cite{cai2020monocular, wang2021fcos3d, brazil2019m3d, ding2020learning, liu2021autoshape, manhardt2019roi, barabanau2019monocular, li2022diversity, zhang2021objects, simonelli2019disentangling} and Multi-view~\cite{philion2020lift, wang2022detr3d, liu2022petr, liu2022petrv2, chen2022polar, jiang2022polarformer, li2022bevformer, reading2021categorical, huang2021bevdet, li2022unifying, li2022bevdepth, huang2022bevdet4d, li2022bevstereo}. Single-view detection branches into various methodologies, such as exploiting CAD models~\cite{liu2021autoshape, manhardt2019roi, barabanau2019monocular}, focusing on key points as prediction targets~\cite{li2022diversity, zhang2021objects}, and separating transformations for integrated 2D and 3D detection~\cite{simonelli2019disentangling}. Notably, FCOS3D~\cite{wang2021fcos3d} simultaneously predicts both 2D and 3D attributes, while D4LCN~\cite{ding2020learning} seeks to craft a more robust 3D structure by substituting 2D depth predictions with a pseudo-LiDAR approach. Moreover,~\cite{cai2020monocular} derives object depth by factoring in their actual heights. To optimally leverage depth data,~\cite{huang2022monodtr} introduces an end-to-end depth-aware transformer network. Nonetheless, with precision and practicality in focus, the trend is increasingly shifting towards proposing multi-view 3D object detectors.

The Multi-view approach to 3D object detection is primarily divided into two categories: transformer-based~\cite{carion2020end} and LSS-based~\cite{philion2020lift} systems.
Transformer-based methods, such as DETR3D~\cite{wang2022detr3d}, innovate by adapting the DETR~\cite{carion2020end} framework for 3D detection, employing a transformer network to predict 3D bounding boxes. Building upon this, subsequent studies~\cite{liu2022petr, liu2022petrv2, chen2022polar, jiang2022polarformer} utilize object or BEV grid queries~\cite{li2022bevformer} for feature extraction from images and apply attention mechanisms to facilitate the 2D-to-3D conversion. However, transformer-based models do not directly project image features into a BEV format.
On the other hand, LSS-based models~\cite{philion2020lift}, such as~\cite{reading2021categorical, huang2021bevdet, li2022unifying}, predict LiDAR depth distributions and construct a point cloud from multi-view image features for enhanced 3D detection. Notably, BEVDepth~\cite{li2022bevdepth} leverages depth supervision to refine the voxel pooling process, while BEVDet4d~\cite{huang2022bevdet4d} and BEVStereo~\cite{li2022bevstereo} integrate temporal data for comprehensive volume analysis over time. In this work, we select BEVDepth~\cite{li2022bevdepth} as our foundational 3D object detector due to its straightforward yet effective methodology and its exceptional capacity for cross-domain feature extraction.

\subsection{UDA in 3D object detection.}
Domain Adaptive Faster R-CNN~\cite{chen2018domain} pioneered the exploration of the cross-domain challenge in object detection. Drawing on the principles of unsupervised domain adaptation~\cite{ganin2015unsupervised,tian2022unsupervised,tian2023dcl}, numerous efforts~\cite{cai2019exploring,saito2019strong, wang2021exploring, xu2020exploring, xu2020cross, yu2022cross} have adopted the strategy of cross-domain alignment to examine the impact of domain shift across multi-level features. When it comes to 3D object detection, research such as~\cite{luo2021unsupervised, li2022unsupervised, zhang2021srdan} explores UDA for point cloud-based detectors, with studies like~\cite{luo2021unsupervised, zhang2021srdan} employing alignment techniques to synchronize feature and instance-level information between domains. STM3D~\cite{li2022unsupervised} further advances UDA with self-training strategies, utilizing consistent and high-fidelity pseudo labels.

Recent methods~\cite{barrera2021cycle, acuna2021towards, ng2020bev, saleh2019domain} have delved into cross-domain strategies, specifically within BEV perception, to mitigate simulation-to-reality discrepancies. In the realm of camera-based monocular 3D object detection, innovative approaches~\cite{li2022towards, li2022unsupervised} have been made to disentangle camera parameters and ensure geometric consistency during the cross-domain transition. 
In contrast, our work is committed to addressing the challenge of domain discrepancy in multi-view 3D object detection tasks, which reconstruct 3D scenes from a BEV standpoint.

\section{Methods}
\subsection{Preliminary}
\label{sec:setup}
For the UDA setting~\cite{zhao2020review}, we are provided by labeled source domain $D_{s} = \{\{I^i_{s}\}^M_{j=1},K^i_{s},G^i_{s}\}^{N_{s}}_{i=1}$ and unlabeled target domain $D_{t} = \{\{I^i_{t}\}^M_{j=1},K^i_{t}\}^{N_{t}}_{i=1}$ of $N$ samples and $M$ camera views, in which $I^i$, $K^i$, and $G^i$ denote images, camera configurations, and detection ground truth respectively. In this paper, we consider a depth-aware BEV model that operates through a sequential three-stage process for enhanced 3D perception. Specifically, the BEV model is an assembly of several components: a 2D image-view feature encoder $\mathcal{E}_{img}$\cite{he2016deep}, a depth estimation network $\mathcal{N}_{dep}$, a 3D voxel feature transformer $\mathcal{T}_{vox}$, a BEV feature decoder $\mathcal{D}_{bev}$, and a 3D detection head $\mathcal{H}_{det}$. Given a collection of multi-camera images denoted as $I^i_{t}$, the BEV model commences by encoding these images into image-view features via $\mathcal{E}_{img}$ in the following manner: 
\begin{equation}
F^{i}_{img} = \mathcal{E}_{img}(I^{i}_{t}),
\label{eq:ie}
\end{equation}
where $F^{i}_{img} \in \mathbb{R}^{C_{I}\times H \times W}$. We than estimate the images depth $F^{i}_{dep} \in \mathbb{R}^{C_{D}\times H \times W}$ with the corresponding camera configuration $K^{i}\in \mathbb{R}^{3\times 3}$ based on depth-Net $\mathcal{N}_{dep}$ as:
\begin{equation}
F^{i}_{dep} = \mathcal{N}_{dep}(F^{i}_{img},K^{i}),
\label{eq:id}
\end{equation}
where $C_{D}$ stands for the number of depth bins. The image feature $F^{i}_{img}$ and the depth feature $F^{i}_{dep}$ are then projected into BEV features $F^{i}_{vox}\in \mathbb{R}^{C_{I}\times C_{D}\times H \times W}$ with the transformer $\mathcal{T}_{vox}$ and use the 3D pooling layer $\mathcal{O}_{3dp}$ to generate the $F^{i}_{bev}\in \mathbb{R}^{C_{I}\times C'_{D}\times H' \times W'}$ with kernel size and stride of $(D_{p},H_{p},W_{p})$ and $(D_{s}, H_{s}, W_{s})$:
\begin{equation}
\begin{aligned}
F^{i}_{vox} &= \mathcal{T}_{vox}(F^{i}_{img},F^{i}_{dep}), \\
F^{i}_{bev} &= \mathcal{O}_{3dp}(F^{i}_{vox}),
\end{aligned}
\label{eq:bev}
\end{equation}
where $C'_{D} = \left \lfloor \frac{D - D_{p}}{D_{s}} \right \rfloor + 1$, with $H'$ and $W'$ calculated analogously. Furthermore, the BEV decoder $\mathcal{D}_{bev}$ then decodes the BEV feature $F^{i}_{bev}$ in conjunction with a collection of learnable BEV queries $q^{i}_{bev}$:
\begin{equation}
Q^{i}_{bev} = \mathcal{D}_{bev}(F^{i}_{bev}, q^{i}_{bev})
\label{eq:pre}
\end{equation}
where the BEV queries $q^{i}_{bev}$ engage with the BEV feature $F^{i}_{bev}$ to encapsulate the three-dimensional positional information of objects within the BEV space, resulting in the decoded query feature $Q^{i}_{bev}$. Ultimately, a 3D detection head, denoted by $\mathcal{H}_{det}$, is employed to derive the final predictions. These predictions are then evaluated against the source domain labels $G_{s}^{i}$ using the detection loss function $\ell_{det}$:
\begin{equation}
\label{eq:sup}
\mathcal{L}_{SUP} = \ell_{det}(\mathcal{H}_{det}(Q^{i}_{bev}), G_{s}^{i})
\end{equation}
where $\mathcal{L}_{SUP}$ represents the supervision loss, which quantifies the discrepancy between the predicted outputs and the ground truth labels from the source domain.

\subsection{Motivation and Overall Framework}
\label{sec:overall}
Recent BEV perception methods~\cite{carion2020end, philion2020lift, li2022bevdepth} mainly focus on improving accuracy or efficiency but ignore commonly existing domain shift problems in the real-world, resulting in performance degradation on different data distribution. This thus motivates us to carefully explore the DA problem for BEV perception. Compared with existing DA tasks, LSS-based BEV perception approaches~\cite{philion2020lift, huang2021bevdet, li2022bevdepth} are relatively complicated and contain more geometric spaces (2D image, 3D voxel, and BEV space), thus increasing domain shift error accumulation on the target domain. 
As shown in Fig. \ref{fig:intro} (a), we utilize T-SNE~\cite{van2013barnes} to visualize the data distribution distance between the source and target domain in the BEV feature. As we can see, the tremendous distribution distance reflects that the BEV feature (last latent space) assembles the accumulated domain shift from all previous spaces. 

\begin{figure}[t]
\includegraphics[width=0.48\textwidth]{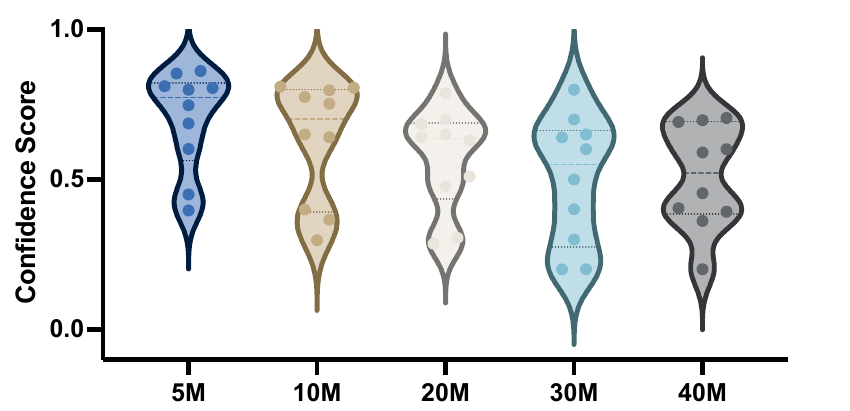}
\centering
\caption{The ten times (with Dropout) confidence scores for six pixels which are located at different M distances. M indicates meter.}
\label{fig:conf}
\end{figure}


Therefore, inspired by the observation that mean teacher predictions often exhibit higher quality than standard models~\cite{tarvainen2017mean}, we leverage a teacher-student framework to maintain stability in target domains. As shown in Fig .\ref{fig:method}, we propose a BEV-oriented geometric-aware teacher-student framework, BEVUDA++, which aims to ease the domain shift error accumulation in the target domain. It consists of an RDT model that tactfully combines target LiDAR and reliable depth prediction to compose depth-aware information under uncertainty guidance~\cite{gal2016dropout,zhang2023unimodal} towards more robust EMA teacher model update, and a GCS model to jointly address the accumulated domain shift error in multi-latent space. 



\subsection{Reliable Depth Teacher}
\label{sec:DAT}

The reliability of depth plays an important role in LSS-based BEV perception since it constructs the crucial BEV feature with 2D features~\cite{li2022bevdepth,li2022bevstereo}. Though confidence is a straightforward measurement to reflect reliability, it is trustless in pixel-wise cross-domain scenarios. As shown in Fig. \ref{fig:conf}, for each image pixel, we calculate the confidence score over time and find it fluctuates irregularly. Since domains shift impact seriously on the source model, the confidence score can no longer reflect a stable probability of the pixel being predicted correctly. We thus intend to obtain more reliable depth information in the target domain with the Reliable Depth Teacher (RDT) model, which aims to leverage depth-aware information to construct Voxel and BEV features with sufficient target domain knowledge. We enhance the teacher model's depth information by integrating sparse LiDAR data $\{L_{ldr}\}_{i=1}^{N}$ with reliable depth predictions.


Though the target LiDAR data can reflect the accurate depth information for each object, it is sparse and can lead to insufficient semantic knowledge in constructed BEV features. Therefore, we explore a new solution in dense prediction domain adaptation tasks to reduce the noise in training. Inspired by the efficacy of Bayesian deep learning~\cite{kendall2017uncertainties}, we employ an uncertainty estimation mechanism that is grounded in Bayesian inference to identify dependable depth predictions. For a hypothetical Bayesian network $\theta$, where all parameters are treated as random variables, the direct computation of the posterior distribution is typically infeasible. Consequently, we can utilize variational inference~\cite{blei2017variational} for approximation purposes. Given an input $X$, the output $Y$’s predictive distribution is obtained through $m$ stochastic forward passes, with network parameters drawn from an approximating variational distribution $q(\theta)$:
\begin{equation}
\begin{aligned}
p(Y|X) &= \int p(Y|X, \theta)q(\theta) d\theta \\
&\approx \frac{1}{m}\sum_{n=1}^{m} p(Y|X, \theta^{n}), \ \ \  \theta^{t} \sim q(\theta),
\end{aligned}
\end{equation}
where $p(Y|X, \theta^{n})$ is one forward pass with model parameter $\theta^{n}$. However, since it is incapable to directly use variational inference to estimate the posterior distribution under our neural network-based BEVUDA++ framework, therefore, we adopt the MC Dropout method~\cite{gal2016dropout} to perform Bayesian inference without changing model structure and parameter in BEVUDA++ by sampling model parameters $m$ times (e.g., $m=5$) forward propagation and obtain $m$ ensembles for each pixel. We calculate the uncertainty map of the depth prediction and figure out how it is influenced by domain shift:
\begin{equation}
\mathcal{U} (X_p) =  \left( \frac{1}{m} \sum_{n=1}^m \|p_t(Y_p|X_p) - \mu \|^2 \right) ^{\frac{1}{2}},
\label{eq:mc}
\end{equation}
where $p_n(Y_p|X_p)$ denotes the probability of the input pixel $X_p\in I^{i}_{s}$ during the current $n^{th}$ forward pass, while $\mu$ represents the average probability across $m$ rounds for $X_p$ and $Y_p\in G^{i}_{s}$. The term $\mathcal{U} (X_p)$ encapsulates the uncertainty of the source model with respect to the pixel-wise target input $X_p$. Those predicted depth points that exhibit an uncertainty below a predefined threshold $\Theta$ are retained, as they demonstrate greater adaptability within the target domain. Conversely, depth points that do not meet this criterion are supplanted by the corresponding LiDAR $\{L_{ldr}\}_{i=1}^{N}$:
\begin{equation}
\left\{
\begin{aligned}
    & F^{p}_{dep} = F^{p}_{dep}, \quad \mathcal{U} (X_p) \leq \Theta \\
    & F^{p}_{dep} = L^{p}_{ldr}, \quad \mathcal{U} (X_p) > \Theta \quad 
\end{aligned}
\right.
\end{equation}

Consequently, RDT leverages depth-aware information to more effectively distill knowledge pertinent to the target domain, which is subsequently imparted to the student model to facilitate cross-domain learning.

\subsection{Geometric Consistent Student}
\label{sec:MFA}
We introduce the Geometric Consistent Student (GCS) to maximize the utility of the transferred knowledge while simultaneously mitigating the cumulative domain shift. Leveraging geometric consistency, we project features from multi-geometric spaces, such as 2D images, voxels, and BEV space, into a unified geometric embedding space. This process reduces the disparity in data distribution between the source and target domains. Concretely, within the source domain, we employ individual Multilayer Perceptrons (MLPs) to transform the three types of features into a communal embedding space. The resulting embedding has dimensions $3 \times C \times n$, where the channel dimension $C=256$, and $n$ represents the number of object categories. Next, we reshape the feature dimensions to $768 \times n$ and utilize a collective MLP to consolidate the categorical features into a prototype specific to the source domain, sized at $256 \times n$. Following the same procedure, we generate a corresponding prototype for the target domain.
To align the prototypes from both domains, we employ an alignment loss $\mathcal{L}_{ALI}$ as delineated by~\cite{ganin2016domain}. $\mathcal{L}_{ALI}$ is defined in Eq. \ref{eq:MFA}, which aligns the two domain prototypes and effectively bridges the gap between them:
\begin{equation}
\label{eq:MFA}
\mathcal{L}_{ALI}(F_{s},F_{t}) = \log{\mathcal{D}_{dis}}(F_{s}) + \log(1-\mathcal{D}_{dis}(F_{t}))
\end{equation}
where $F_{s}$ and $F_{t}$ denote the prototype representations of the source and target domains, respectively, while $\mathcal{D}_{dis}$ serves as the domain discriminator.


Additionally, we extend the comprehensive transfer of multi-space knowledge from the teacher model $\mathcal{T}_{rdt}$ to the student model $\mathcal{S}_{gcs}$, reinforcing consistency within the unified geometric space. This process is governed by the knowledge transfer loss $\mathcal{L}_{MKT}$, which is defined as follows:
\begin{equation}
\mathcal{L}_{MKT} = \sum_{l\in L}\frac{1}{W_{l}'\times H_{l}'}\sum_{p\in P}||F_{s,l}^{{p}}-F_{t,l}^{{p}}||^{2}
\label{eq:KT}
\end{equation}
where $F_{s,l}^{p}$ and $F_{t,l}^{p}$ represent the value of the $p^{th}$ pixel from the source and target domains, respectively, within the $l^{th}$ geometric space, where $l$ encompasses the set $\{\text{2D images, 3D Voxel, BEV}\}$. The terms $W_{l}^{'}$ and $H_{l}^{'}$ denote the width and height of the feature maps being transferred. The set $P$ encompasses all pixel positions, given by $\{1, 2, ..., W_{l}' \times H_{l}'\}$. In the GCS model, the alignment loss $\mathcal{L}_{ALI}$ and knowledge transfer loss $\mathcal{L}_{MKT}$ collaboratively improved feature alignment and significantly diminished the domain divergence.

\subsection{Uncertainty-guided Exponential Moving Average}
\label{sec:ema}
The rest of the teacher model is built with exponential moving average (EMA)~\cite{tarvainen2017mean}. The initial weights of the teacher model and student model are loaded from the source domain pre-trained model. The standard EMA approach employs a smoothing coefficient $\alpha$ to progressively update the teacher model's parameters based on those of the student model, using exponentially decaying weights to enhance stability and performance. However, a static $\alpha$ does not account for the dynamic nature of model updates in evolving domain shifts. To address this, we harness the calculated estimated uncertainty mean value $\mathcal{U}^{t}$ from the student model $\mathcal{S}_{gcs}$ at training iteration $t$, integrating it into the EMA to facilitate an adaptive update of the teacher model $\mathcal{T}_{rdt}$ for generating more reliable pseudo labels to guide the student model to study the data distribution of the target domain more effectively. Thus, the modified EMA equation can be presented as:
\begin{equation}
\label{eq:uema}
\begin{aligned}
     \mathcal{T}_{rdt}^{t} = &(\alpha+\sigma\times\mathcal{U}^{t-1})\cdot\mathcal{T}_{rdt}^{t-1} \\
     &+ (1- \alpha - \sigma\times\mathcal{U}^{t-1}) \cdot\mathcal{S}_{gcs}^{t}
\end{aligned}
\end{equation}
where $\alpha$=0.999 and $\sigma$=0.001. Simultaneously, we employ classification confidence as a criterion for pseudo-label generation, mitigating the risk of error propagation within the RDT model.

The uncertainty value $\mathcal{U}^t$ tends to fluctuate during the initial stages of training, subsequently stabilizing as the model develops greater resilience to domain shifts. When $\mathcal{U}^t$ is elevated at a particular iteration, it indicates a heightened level of uncertainty within the RDT model, casting doubt on the reliability of the pseudo-labels it produces. Consequently, the GCS model, which is updated based on these labels, may also be compromised in its reliability. In such cases, it is prudent to restrict the weight updates to the RDT model to prevent the propagation of unreliable information. Conversely, when uncertainty is low, indicating a more confident model stance, the weight updates can be more liberally applied. This adaptive approach ensures that the teacher-student framework responds appropriately to the model's evolving confidence, thereby maintaining the integrity of the learning process. 

\subsection{Training objectives and inference}
\label{sec:loss}
In the overall framework, RDT and GCS jointly address the domain shift with UEMA in multi-geometric space and avoid domain shift accumulation.  
Following~\cite{li2022bevdepth}, we use the same detection loss ($\mathcal{L}_{UNC}$) to update the student model, which is penalized by the target domain pseudo label.
Meanwhile, the integrated domain adaptation loss $\mathcal{L}_{DA}$ is shown in Eq.\ref{eq:domain}.
\begin{equation}
    \mathcal{L}_{DA} =   \lambda_1*\mathcal{L}_{UNC} + \lambda_2*\mathcal{L}_{SUP} + \lambda_3*\mathcal{L}_{MKT} + \lambda_4*\mathcal{L}_{ALI},
\label{eq:domain}
\end{equation}
where $\mathcal{L}_{SUP}$ represents the detection loss as defined by~\cite{li2022bevdepth}, which is calculated using the source domain detection labels as outlined in Eq.\ref{eq:sup}. To ensure equilibrium among the loss penalties, we assign the weights $\lambda_1$ and $\lambda_2$ a value of 1, while $\lambda_3$ and $\lambda_4$ are adjusted to 0.1. During the inference phase, in line with other camera-based approaches~\cite{li2022bevdepth,li2022bevformer,li2022bevstereo}, our method exclusively utilizes data from multi-view cameras.

\section{Justification of inter-domain divergence}
\label{sec:just}
To validate the effectiveness of BEVUDA++, we assess its domain representation by calculating the distribution distance using Ben-David's domain distance definition~\cite{ben2006analysis, ben2010theory} and the $\mathcal{H}$-divergence metric, extending prior domain transfer research~\cite{ganin2016domain}. The $\mathcal{H}$-divergence between the source domain $D_S$ and target domain $D_{T}$ is computed as follows:
\begin{equation}
\begin{aligned}
    d_\mathcal{H}(D_S, D_{T}) = 2 \mathop{\mathrm{sup}}_{\mathcal{D} \sim \mathcal{H}} | \mathop{\mathrm{Pr}}_{x \sim D_S}[\mathcal{D}(x)=1] - \mathop{\mathrm{Pr}}_{x \sim D_{T}}[\mathcal{D}(x)=1] |
\end{aligned}
\label{apeq:dh}
\end{equation}
where $\mathcal{H}$ denotes the hypothesis space and $\mathcal{D}$ the discriminator. Initially, we quantify the target error relative to the source error, examining the discrepancy between labeling functions $f_S$ and $f_T$, and the divergence between distributions $\mathcal{P}_{D_{S}}$ and $\mathcal{P}_{D_{T}}$~\cite{ben2006analysis, ben2010theory}. This analysis establishes an initial bound on the discriminator's target error.
\begin{theorem}
\textit{For a hypothesis $\mathcal{H}$,}
\begin{equation}
\begin{aligned}
    \epsilon_{T}(\mathcal{H}) &\leq \epsilon_{S}(\mathcal{H}) + d(\mathcal{P}_{D_{S}}, \mathcal{P}_{D_{T}}) \\
    + &min\{E_{\mathcal{P}_{D_{S}}}[|f_{S}(x)-f_{T}(x)|], E_{\mathcal{P}_{D_{T}}}[|f_{S}(x)-f_{T}(x)|] \}
\end{aligned}
\end{equation}
\end{theorem}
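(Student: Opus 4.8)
The plan is to read $\epsilon_T(\mathcal{H})$ as the expected absolute disagreement between the hypothesis $\mathcal{H}$ and the target labeling function $f_T$ under $\mathcal{P}_{D_T}$, and to reduce it to the source error $\epsilon_S(\mathcal{H})$ through two elementary moves: the triangle inequality for the absolute-value loss, and a single measure-transfer step controlled by the distribution distance $d(\mathcal{P}_{D_S}, \mathcal{P}_{D_T})$. The whole argument rests on $|a - c| \le |a - b| + |b - c|$, which lets me insert any intermediate reference function between $\mathcal{H}$ and $f_T$ at an additive cost.

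First I would write $\epsilon_T(\mathcal{H}) = E_{\mathcal{P}_{D_T}}[|\mathcal{H}(x) - f_T(x)|]$ and insert the source labeling function $f_S$, obtaining $\epsilon_T(\mathcal{H}) \le E_{\mathcal{P}_{D_T}}[|\mathcal{H}(x) - f_S(x)|] + E_{\mathcal{P}_{D_T}}[|f_S(x) - f_T(x)|]$. The second summand is the target-side labeling-discrepancy term. For the first I would transfer the integrand $g(x) = |\mathcal{H}(x) - f_S(x)|$ from $\mathcal{P}_{D_T}$ to $\mathcal{P}_{D_S}$ using the defining property of $d$, namely $|E_{\mathcal{P}_{D_S}}[g] - E_{\mathcal{P}_{D_T}}[g]| \le d(\mathcal{P}_{D_S}, \mathcal{P}_{D_T})$ for admissible $g$ valued in $[0,1]$. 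This gives $E_{\mathcal{P}_{D_T}}[g] \le \epsilon_S(\mathcal{H}) + d(\mathcal{P}_{D_S}, \mathcal{P}_{D_T})$, and combining the two estimates yields the bound carrying the target-side term $E_{\mathcal{P}_{D_T}}[|f_S(x) - f_T(x)|]$.

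Next I would obtain the symmetric companion bound by reversing the order of the two moves: first transfer $|\mathcal{H}(x) - f_T(x)|$ from target to source, giving $\epsilon_T(\mathcal{H}) \le E_{\mathcal{P}_{D_S}}[|\mathcal{H}(x) - f_T(x)|] + d(\mathcal{P}_{D_S}, \mathcal{P}_{D_T})$, and only afterwards insert $f_S$ by the triangle inequality, so that the labeling-discrepancy term is now measured under the source, $E_{\mathcal{P}_{D_S}}[|f_S(x) - f_T(x)|]$. Since both chains of inequalities hold for the same $\mathcal{H}$, the bound still holds when I keep the smaller of the two labeling-discrepancy terms, which produces exactly the stated minimum.

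The step I expect to be the main obstacle is the measure-transfer inequality, since it is where the precise meaning of $d$ must be fixed: the argument requires $d(\mathcal{P}_{D_S}, \mathcal{P}_{D_T})$ to dominate $|E_{\mathcal{P}_{D_S}}[g] - E_{\mathcal{P}_{D_T}}[g]|$ over exactly the function class containing the absolute differences $|\mathcal{H} - f_S|$ and $|\mathcal{H} - f_T|$. If $d$ is the total-variation (or $L^1$) distance this holds immediately for any $g$ valued in $[0,1]$; if instead $d$ is the $\mathcal{H}$-divergence of Eq.\ref{apeq:dh}, one must either restrict attention to the hypothesis-induced function class or invoke the standard admissibility assumption on these absolute-difference functions, so I would state that compatibility condition explicitly before applying the transfer.
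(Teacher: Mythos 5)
Your proposal is correct and follows essentially the same route as the paper: the paper's add-and-subtract telescoping around $\epsilon_{S}(\mathcal{H},f_{T})$ is exactly your triangle-inequality insertion of an intermediate labeling function combined with the measure-transfer step bounded by $\int|\phi_{S}(x)-\phi_{T}(x)||\mathcal{H}(x)-f_{T}(x)|dx\leq d(\mathcal{P}_{D_{S}},\mathcal{P}_{D_{T}})$, and the minimum arises, as you note, from performing the two steps in the two possible orders (the paper carries out one order explicitly and remarks on the other). Your closing caveat about when $d$ dominates the change of expectation is the same implicit assumption the paper makes (functions valued in $[0,1]$ and $d$ taken as the $L^{1}$ distance between densities), so nothing is missing.
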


\begin{proof}
\textit{Given that $\epsilon_{T}(\mathcal{H}) = \epsilon_{T}(\mathcal{H}, f_{T})$ and $\epsilon_{S}(\mathcal{H}) = \epsilon_{S}(\mathcal{H}, f_{S})$, let $\phi_{S}$ and $\phi_{T}$ denote the density functions of distributions $\mathcal{P}_{D_{S}}$ and $\mathcal{P}_{D_{T}}$, respectively, we can have:}
\begin{equation}
    \begin{aligned}
        \epsilon_{T}(\mathcal{H}) &= \epsilon_{T}(\mathcal{H}) + \epsilon_{S}(\mathcal{H}) - \epsilon_{S}(\mathcal{H}) + \epsilon_{S}(h,f_{T}) - \epsilon_{S}(\mathcal{H},f_{T}) \\ 
        &\leq \epsilon_{S}(\mathcal{H}) + |\epsilon_{S}(\mathcal{H},f_{T}) - \epsilon_{S}(\mathcal{H},f_{S})| \\
        &\quad + |\epsilon_{T}(\mathcal{H},f_{T}) - \epsilon_{S}(\mathcal{H},f_{T})| \\ 
        &\leq \epsilon_{S}(\mathcal{H}) + E_{\mathcal{P}_{D_{S}}}[|f_{S}(x)-f_{T}(x)|] \\
        &\quad + |\epsilon_{T}(\mathcal{H},f_{T}) - \epsilon_{S}(\mathcal{H},f_{T})| \\ 
        &\leq \epsilon_{S}(\mathcal{H}) + E_{\mathcal{P}_{D_{S}}}[|f_{S}(x)-f_{T}(x)|] \\
        &\quad + \int|\phi_{S}(x) - \phi_{T}(x)||\mathcal{H}(x)-f_{T}(x)|dx \\
        &\leq \epsilon_{S}(\mathcal{H}) + E_{\mathcal{P}_{D_{S}}}[|f_{S}(x)-f_{T}(x)|] + d(\mathcal{P}_{D_{S}}, \mathcal{P}_{D_{T}})
    \end{aligned}
\end{equation}
\end{proof}

In the first line, we can alternatively add and subtract $\epsilon_{T}(h,f_{S})$ instead of $\epsilon_{T}(h,f_{T})$, which theoretically yields a similar bound with respect to the target distribution $\mathcal{P}_{D_{T}}$ rather than $\mathcal{P}_{D_{S}}$. Choosing the lesser of the theoretical bounds provides a more advantageous constraint. However, estimating the error for the optimal hyperplane discriminator across arbitrary distributions remains an NP-hard problem. To counter this, we minimize a convex upper bound on the error, a common approach in classification, though it does not yield a valid upper bound on the target error. Therefore, we adopt the Jensen-Shannon (JS) divergence as an approximate measure of $\mathcal{H}$-divergence between $\mathcal{P}_{D_{S}}$ and $\mathcal{P}_{D_{T}}$.A minimal inter-domain divergence, detailed in~\cite{ganin2016domain}, signifies robust feature representation with diminished susceptibility to domain adaptation.
\begin{equation}
\begin{aligned}
    JS(\mathcal{P}_{D_{S}} || \mathcal{P}_{D_{T}}) &= \frac{1}{2}KL(\mathcal{P}_{D_{S}}|| \frac{\mathcal{P}_{D_{S}}+\mathcal{P}_{D_{T}}}{2}) \\
    &+ \frac{1}{2}KL(\mathcal{P}_{D_{T}}|| \frac{\mathcal{P}_{D_{S}}+\mathcal{P}_{D_{T}}}{2}),
\end{aligned}
\label{eq:js}
\end{equation}
where $KL(\cdot||\cdot)$ denotes the $Kullback-Leibler Divergence$. As shown in Fig. \ref{fig:divergence}, the BEVUDA++ shows significantly lower inter-domain divergence, indicating robust feature representation across domains. 

\section{Discussion on computational efficiency}
During the training phase, the additional computational overhead of the BEVUDA++ framework primarily arises from the integration of multi-geometric space features and uncertainty-guided depth estimation. The extra computational costs associated with multi-geometric space feature integration are comparable to those of STM3D~\cite{li2022unsupervised} and MTTrans~\cite{yu2022mttrans}. The main computational burden of uncertainty estimation lies in multiple forward passes through the depthnet. However, since this process does not involve complex backward propagation and the depthnet comprises only a few layers of linear and convolutional operations with a relatively small number of parameters, the increase in computational load amounts to only a few percentage points relative to the overall model.

During the inference phase, only the student model (GCS) participates and thus maintains computational efficiency. As uncertainty estimation is not performed during inference, there is no additional computational overhead or increase in model parameters compared to the original model (e.g., BEVDepth~\cite{li2022bevdepth}), which helps to preserve low latency crucial for real-time applications. Experiments demonstrate that both BEVUDA++ and BEVDepth achieve a frame rate of 24.3 FPS at INT8 precision on a V100 GPU.

Thus, despite a sacrifice in computational efficiency during training, the enhancements in inference efficiency and overall performance may well justify this trade-off. In practical applications, particularly in the field of autonomous driving, such performance improvements are likely critical, as they can significantly enhance detection accuracy across diverse environments and conditions.

\section{Experiments}
We carry out comprehensive experiments to validate the efficacy of the BEVUDA++. 
In Section~\ref{sec:4.1}, we detail the configuration of UDA scenarios and the specifics of our implementation. 
Section~\ref{sec:4.2} evaluates the cross-domain capabilities of BEVUDA++ across a variety of challenging environments.
A series of thorough ablation studies are presented in Section~\ref{ablation}, which dissects the contribution of each component within the framework. 
Lastly, in Section~\ref{sec:4.4}, we offer a qualitative analysis to better illustrate the benefits of our proposed BEVUDA++ framework.

\begin{figure}[t]
\includegraphics[width=0.48\textwidth]{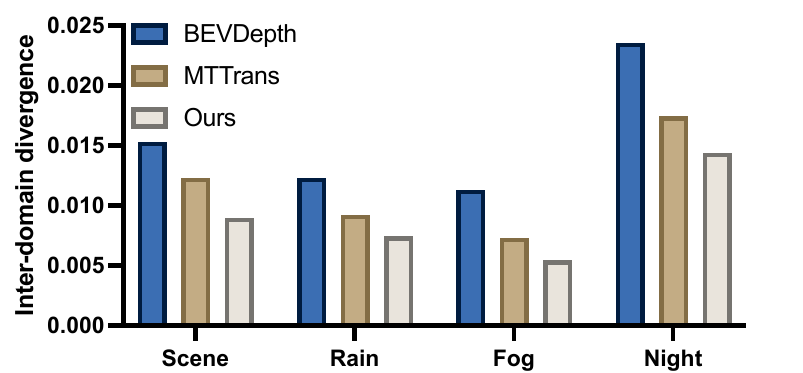}
\centering
\caption{Inter-domain divergence of the middle layer feature representations. The x-axis suggests the target domain.}
\label{fig:divergence}
\end{figure}

\subsection{Experimental setup}
\label{sec:4.1}
\subsubsection{Datasets and adaptation scenarios}
We have a comprehensive evaluation of our proposed framework on the large-scale nuScenes dataset~\cite{caesar2020nuScenes}, which is pivotal for advancing autonomous driving technologies. To facilitate progress in UDA for multi-view 3D object detection, we have partitioned the nuScenes dataset into distinct pairs of source-target domain data. We define and investigate three classical scenarios that embody the cross-domain challenges: \textbf{Scene}, \textbf{Weather}, and \textbf{Day-Night} transitions. Given the real-world scarcity of datasets featuring foggy conditions, we have augmented the available data by creating a synthetic foggy dataset, coined as Foggy-nuScenes, as shown in Fig. \ref{fig:fog}. This dataset was meticulously generated using a state-of-the-art fog simulation pipeline~\cite{sakaridis2018semantic} applied to the multi-view images within the nuScenes dataset~\cite{caesar2020nuScenes}. Our process synthesized foggy images across all scenes within the training, validation, and test sets, meticulously preserving the original 3D object detection annotations. To comprehensively cover the spectrum of visibility conditions, we constructed the Foggy-nuScenes dataset with five distinct levels of fog density: Foggy-1 through Foggy-5, with each successive level representing an increase in fog density. For the experiment's simplicity, we selected Foggy-1, Foggy-3, and Foggy-5 to serve as our experimental datasets. These datasets represent a range of weather conditions, ensuring that our experiments encompass significant domain shifts from the original nuScenes dataset~\cite{caesar2020nuScenes}, thereby providing a robust benchmark for weather adaptation and the evaluation of models under dynamically changing conditions.

\noindent\textbf{Scene Adaptation} We designate Boston as the source scene and execute UDA for the Singapore target domain. The ever-evolving scene layouts in autonomous driving lead to a domain gap across various scenes~\cite{yu2022cross, xu2020exploring}.

\noindent\textbf{Weathers Adaptation} Sunny conditions are set as the source domain data, with rainy and foggy weathers as the target domains. Real-world BEV detection must contend with these varied states and maintain reliability~\cite{cai2019exploring, wang2021exploring}.

\noindent\textbf{Day-Night Adaptation} We set daytime as the source domain and pursue UDA on the night-time target domain. Given the significant domain shift from day to night in camera-based methods, exploring domain adaptation for this scenario is crucial~\cite{sakaridis2021acdc, Wangetal2022}.

\noindent\textbf{Continues Changing Adaptation} We use sunny weather data as the source domain and incrementally denser fog conditions as the target domain, transitioning UDA from sunny to Foggy-1, Foggy-3, and Foggy-5 as fog intensity escalates.

\subsubsection{Implementation details}
The BEVUDA++ framework is developed on the BEVDepth architecture~\cite{li2022bevdepth}. Drawing from the literature~\cite{li2022bevdepth, reading2021categorical, huang2021bevdet, li2022unifying}, we utilize ResNet-50 and ResNet-101~\cite{he2016deep} as the backbone networks for feature extraction. Our configuration includes an image input size of $256\times 704$ and employs the same data augmentation techniques as~\cite{li2022bevdepth}. We opt for the AdamW optimizer~\cite{loshchilov2017decoupled} with a learning rate of 2e-4, foregoing any learning rate decay. For the training regime, we conduct source domain pre-training for 24 epochs and UDA experiments for 12 epochs, \textbf{excluding CBGS}~\cite{zhu2019class}. In the inference phase, we refrain from using test-time augmentation or model ensembles. All experiments are executed on NVIDIA Tesla V100 GPUs.

\subsubsection{Evaluation metrics}
Our evaluation metrics are in line with prior BEV research~\cite{li2022bevdepth,huang2021bevdet}, incorporating nuScenes Detection Score (NDS), mean Average Precision (mAP), and five True Positive (TP) metrics: mean Average Translation Error (mATE), mean Average Scale Error (mASE), mean Average Orientation Error (mAOE), mean Average Velocity Error (mAVE), and mean Average Attribute Error (mAAE).

\begin{figure}[t]
\includegraphics[width=0.48\textwidth]{./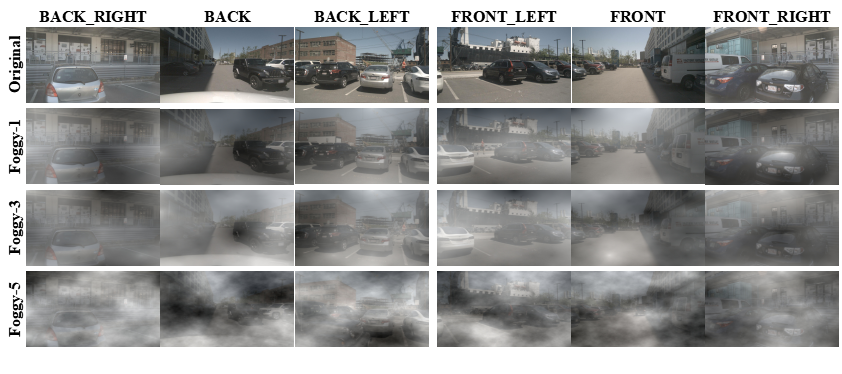}
\centering
\caption{The visualization of the Foggy-nuScenes dataset. The first row is the original multi-view images in nuScenes~\cite{caesar2020nuScenes}, and the last three rows demonstrate images of increased foggy degree.}
\label{fig:fog}
\end{figure}


\subsubsection{Generation of foggy nuScenes dataset}
We apply the fog simulation pipeline~\cite{sakaridis2018semantic} to the multi-view images provided in the nuScenes dataset~\cite{caesar2020nuScenes} to generate the Foggy-nuScenes dataset. Specifically, we generate synthetic foggy images for all scenes of training, validation, and test sets, which reserve the original annotation of the 3D object detection task. We utilize five different density degrees of the foggy simulator to construct the Foggy-nuScenes dataset, including Foggy-1, Foggy-2, Foggy-3, Foggy-4, and Foggy-5 (gradually increasing fog density). As shown in Fig. \ref{fig:fog}, we adopt Foggy-1, Foggy-3, and Foggy-5 as the experimental datasets for weather adaptation and continual changing scenario, which have an obvious domain gap with original nuScenes dataset~\cite{caesar2020nuScenes}.  

\begin{table*}[t]
  \centering
  \caption{Results of different methods for scene adaptation scenario on the validation set~\cite{caesar2020nuScenes}, from Boston to Singapore.}
    \setlength{\tabcolsep}{3mm}{
    \begin{tabular}{c|c|c|c|cccccc}
    \toprule
        \toprule
     Base/DA & Method & Backbone & \cellcolor{lightgray}NDS ↑& \cellcolor{lightgray}mAP ↑ & mATE ↓ & mASE ↓ & mAOE ↓ & mAVE ↓ & mAAE ↓ \\
    \midrule
    \midrule
     & BEVDet & ResNet-50 & 0.126 & 0.117 & 0.873 & 0.787 & 1.347& 1.302 & 0.666 \\
    Baseline & BEVDepth  & ResNet-50 & 0.174 & 0.115 & 0.888 & 0.412 & 1.031 & 1.056 & 0.527 \\
     & BEVDepth  & ResNet-101 & 0.187 & 0.115 & 0.874 & 0.391 & 0.944 & 1.021 & 0.501 \\
    \midrule
    \multirow{6}{*}{DA} & SFA  & ResNet-50 & 0.181 & 0.124 & 0.856 & 0.411 & 1.023 & 1.075 & 0.540 \\
     & STM3D  & ResNet-50 & 0.183 & 0.129 & 0.840 & 0.421 & 1.050 & 1.055 & 0.550 \\
      & MTTrans  & ResNet-50 & 0.197 & 0.140 & 0.796 & 0.425 & 1.132 & 1.088 & 0.542 \\
      & BEVUDA & ResNet-50 & 0.208 & 0.148 & 0.813 & 0.402 & 0.907 & 1.134 & 0.536 \\
     &BEVUDA & ResNet-101 & 0.211 & 0.166 & 0.758 & 0.427 & 1.127 & 1.108 & 0.535 \\
    \midrule
    \multirow{2}{*}{DA} 
    &BEVUDA++ & ResNet-50 & \textbf{0.209} & \textbf{0.152} & \textbf{0.663} & \textbf{0.423} & \textbf{1.060} & \textbf{1.275} & \textbf{0.561} \\
     &BEVUDA++ & ResNet-101 & \textbf{0.213} & \textbf{0.173} & \textbf{0.712} & \textbf{0.411} & \textbf{1.109} & \textbf{1.097} & \textbf{0.524} \\
    \bottomrule
        \bottomrule
    \end{tabular}
    }
  \label{tab:scene}
\end{table*}

\begin{table*}[t]
  \centering
  \caption{Results of different methods for scene adaptation scenario on the validation set~\cite{caesar2020nuScenes}, from Day to Night.} 
    \setlength{\tabcolsep}{3mm}{
    \begin{tabular}{c|c|c|c|cccccc}
    \toprule
        \toprule
     Base/DA & Method & Backbone &  \cellcolor{lightgray}NDS ↑&  \cellcolor{lightgray}mAP ↑ & mATE ↓ & mASE ↓ & mAOE ↓ & mAVE ↓ & mAAE ↓ \\
    \midrule
    \midrule
     & BEVDet & ResNet-50 & 0.010 & 0.009 & 0.990 & 0.977 & 1.078 & 1.509&0.984 \\
    Baseline & BEVDepth  & ResNet-50 & 0.050 & 0.012 & 0.042 & 0.646 & 1.129 & 1.705 & 0.915 \\
     & BEVDepth  & ResNet-101 & 0.062 & 0.036 & 1.033 & 0.706 & 0.973 & 1.447 & 0.895 \\
    \midrule
    \multirow{6}{*}{DA}  & SFA  & ResNet-50 & 0.092 & 0.032 & 0.995 & 0.556 & 0.993 & 1.480 & 0.948 \\
     & STM3D  & ResNet-50 & 0.070 & 0.035 & 0.979 & 0.549 & 1.063 & 1.587 & 0.937 \\
   & MTTrans  & ResNet-50 & 0.096 & 0.059 & 0.791 & 0.672 & 0.978 & 1.326 & 0.883 \\
    & BEVUDA & ResNet-50 & 0.132 & 0.054 & 0.711 & 0.465 & 1.072 & 1.504 & 0.772 \\
     & BEVUDA  & ResNet-101 & 0.188 & 0.127 & 0.189 & 0.484 & 0.820 & 1.784 & 0.711 \\
    \midrule
    \multirow{2}{*}{DA} 
     & BEVUDA++ & ResNet-50 & \textbf{0.135} & \textbf{0.080} & \textbf{0.797} & \textbf{0.472} & \textbf{0.880} & \textbf{1.385} & \textbf{0.525} \\
     & BEVUDA++  & ResNet-101 & \textbf{0.191} & \textbf{0.131} & \textbf{0.236} & \textbf{0.487} & \textbf{1.152} & \textbf{1.113} & \textbf{0.555} \\
    \bottomrule
        \bottomrule
    \end{tabular}%
    }
  \label{tab:day}%
\end{table*}%

\subsubsection{Backbone and baselines}
Our proposed BEVUDA++ method is benchmarked against existing BEV perception approaches BEVDet~\cite{huang2021bevdet} and BEVDepth~\cite{li2022bevdepth} to affirm its superior performance. Additionally, to showcase our unique design in tackling domain shifts in LSS-based multi-view 3D object detection, we reimplement notable 2D and mono-view 3D detection DA techniques on BEVDepth~\cite{li2022bevdepth}. As for the DA methods, we choose the state-of-the-art domain adaptation baselines include:
\begin{itemize}

    \item \textbf{SFA~\cite{wang2021exploring}} integrates DQFA for global context alignment and TDA for local feature matching, enhanced by a novel consistency loss for robust detection

    \item \textbf{STM3D~\cite{li2022unsupervised}} enhances Mono3D domain adaptation based on a teacher-student paradigm with quality-aware pseudo-label supervision.

    \item \textbf{MTTrans~\cite{yu2022mttrans}} leverages unlabeled target data for object detection via pseudo labels and a multi-level alignment process to alleviate the impact of domain shift.

    \item \textbf{BEVUDA~\cite{liu2024bevuda}} leverages a multi-latent space alignment teacher-student framework to narrow the domain gap under BEV-oriented UDA scenarios.
    
\end{itemize}

We further implement these methods on \textbf{BEVDepth} model for a fair comparison. It should also be noted that we conducted our proposed methods three times and reported the mean values to mitigate the bias introduced by randomness.

\subsection{Main results}
\label{sec:4.2}

\begin{table*}[t]
  \centering
  \caption{Results of different methods for weather adaptation scenarios on the validation set~\cite{caesar2020nuScenes}, from Sunny to Rainy and Foggy-3.} 
    \setlength{\tabcolsep}{3.5mm}{
    \begin{tabular}{c|c|c|ccc|ccc}
    \toprule
        \toprule
     \multirow{2}{*}{Base/DA} & \multirow{2}{*}{Method} & \multirow{2}{*}{Backbone} & & Target Rainy &  &  & Target Foggy-3 &  \\
     &  &  &  \cellcolor{lightgray}NDS ↑&  \cellcolor{lightgray}mAP ↑ & mATE ↓ &  \cellcolor{lightgray}NDS ↑&  \cellcolor{lightgray}mAP ↑ & mATE ↓  \\
    \midrule
    \midrule
     & BEVDet\ & ResNet-50 & 0.232 & 0.207 & 0.818 & 0.135 & 0.072 & 0.867  \\
    Baseline & BEVDepth & ResNet-50 & 0.268 & 0.196 & 0.824 & 0.159 & 0.079 & 0.882  \\
     & BEVDepth  & ResNet-101 & 0.272 & 0.212 & 0.842 & 0.202 & 0.122 & 0.804  \\
    \midrule
     \multirow{6}{*}{DA} & SFA  & ResNet-50 & 0.281 & 0.200 & 0.840 & 0.228 & 0.133 & 0.840  \\
     & STM3D  & ResNet-50 & 0.276 & 0.212 & 0.820 & 0.234 & 0.145 & 0.721  \\
     & MTTrans  & ResNet-50 & 0.283 & 0.199 & 0.800 & 0.244 & 0.159 & 0.733  \\
    & BEVUDA & ResNet-50 & 0.305 & 0.243 & 0.819 & 0.266 & 0.173 & 0.805  \\
     & BEVUDA  & ResNet-101 & 0.308 & 0.247 & 0.726 & 0.271 & 0.174 & 0.793  \\
    \midrule
    \multirow{2}{*}{DA} & BEVUDA++ & ResNet-50 & \textbf{0.307} & \textbf{0.245} & \textbf{0.778} & \textbf{0.269} & \textbf{0.174} & \textbf{0.790}  \\
     &BEVUDA++  & ResNet-101 & \textbf{0.312} & \textbf{0.251} & \textbf{0.702} & \textbf{0.273} & \textbf{0.178} & \textbf{0.784}  \\
    \bottomrule
        \bottomrule
    \end{tabular}%
    }
  \label{tab:weather}%
\end{table*}%

\begin{table}[t]
      \begin{center}
        \caption{Results of different methods for continuous changing adaptation scenario on nuScenes-Foggy, from Sunny to Foggy-1, Foggy-3, and Foggy-5 step by step. The metric is NDS}
        \vspace{-0.2cm}
        \setlength{\tabcolsep}{2mm}{
       	\begin{tabular}{c|c|ccc}
       	\toprule
               	\toprule
     Train on&Method(ResNet-50) & Foggy-1 & \cellcolor{lightgray}Foggy-3 & Foggy-5  \\
        \midrule
        \midrule
        \multirow{5}{*}{Sunny}& BEVDepth & 0.221 & 0.159 & 0.09\\
        & SFA & 0.256 & 0.220 & 0.149   \\
       & STM3D  & 0.251 & 0.238 & 0.183   \\
      & MTTrans  & 0.267 & 0.249 & 0.198   \\
      & BEVUDA & 0.283 & 0.271 & 0.219   \\
        & BEVUDA++ & \textbf{0.286} & \textbf{0.273} & \textbf{0.223}   \\
        \bottomrule
                \bottomrule
		\end{tabular}}
      \label{tab:fog}
      \end{center}
\end{table}


\noindent\textbf{Scene Adaptation} As shown in Table \ref{tab:scene} where DA means utilizing the domain adaptation method, BEVUDA++ outperforms all the baseline methods, which obviously exceeds BEVDepth~\cite{li2022bevdepth} of ResNet-50 and ResNet-101 backbone by 3.5\% and 2.6\% NDS. It thus demonstrates that our proposed method can effectively address the multi-geometric spaces domain shift caused by scene and environmental change. Compared with other SOTA DA methods, BEVUDA++ outperforms SFA and STM3D by 2.8\% and 2.6\% NDS, respectively. And it even improves mAP by 0.7\% compared with previous BEVUDA. The comparison further demonstrates that our proposed method is tailored for BEV object detection, and previous methods lack a special design for addressing the error accumulation problem in multi-geometric space.

\noindent\textbf{Day-Night Adaptation} The Day-Night adaptation is the most challenging scenario for camera-based methods. BEVUDA++ significantly improves the detection performance and solves the domain shift in the Night domain. In TABLE \ref{tab:day}, the tremendous domain gap makes baseline methods perform extremely poorly, with only 6.2\% NDS and 3.6\% mAP under ResNet-101. While BEVUDA++ can achieve 19.1\% NDS and 14.3\% mAP. Even compared with previous BEVUDA methods, it also achieves a superior improvement of more than 0.3\% NDS. Since previous DA methods ignore the inaccuracy and reliable depth estimation in Night data, it thus can not effectively extract target domain knowledge for transfer learning.

\noindent\textbf{Weathers Adaptation}  As shown in TABLE \ref{tab:weather}, in the Sunny to Foggy-3 adaptation scenario, BEVUDA++ outperforms other methods by a significant margin. Compared with STM3D and MTTrans, BEVUDA++ improves NDS by 3.5\% and 2.5\% since it can extract multi-space target domain knowledge to realize a better feature representation of target domain weather data. We also conduct experiments on Sunny to Rainy, BEVUDA++ also increases 0.2\% and 0.4\% NDS compared with BEVUDA. This further proves that our method can alleviate the model uncertainty under different target domain data distributions.

\begin{figure}[t]
\includegraphics[width=0.48\textwidth]{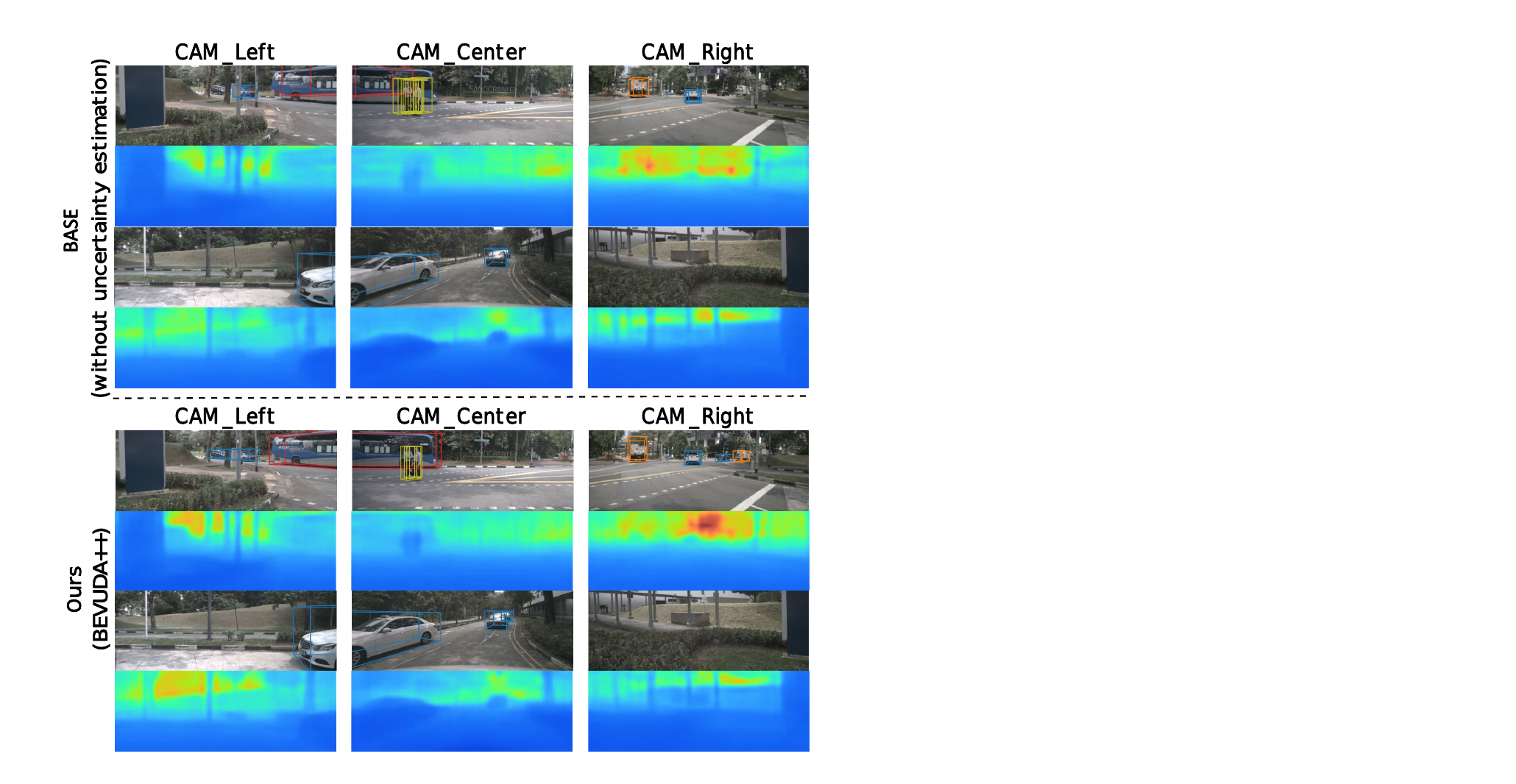}
\centering
\caption{Visualization of depth estimation results on nuScenes val set. The first row is the front view, while the second row is the back view.}
\label{fig:depth}
\vspace{-0.5cm}
\end{figure}

\noindent\textbf{Continues Changing Adaptation} As shown in TABLE \ref{tab:fog}, along with the increased foggy degree, the baseline method shows an obvious performance degradation. However, BEVUDA++ alleviates the gradually increased domain gap and outperforms MTTrans and BEVUDA by 2.5\% and 0.4\% NDS in the final Foggy-5 domain. The results prove that our method also can avoid error accumulation by reducing the model uncertainty to alleviate the continually changing domain shift.

\subsection{Ablation study}
\label{ablation}
To better reflect the role of each component in BEVUDA++, we conduct ablation experiments to analyze how each component can deal with domain shift for LSS-based BEV perception on \textbf{Sunny-Rainy} weather adaptation.

\noindent\textbf{The effectiveness of each component.}
In TABLE \ref{tab:abl}, vanilla BEVDepth ($Ex_{0}$) can only achieve 26.8\% NDS and 19.6\% mAP when the scenario is transformed from sunny to the rainy domain. For RDT, it transfers multi-latent space target domain knowledge to the student model, which is constructed by depth-aware information. As shown in $Ex_{1}$, the student model can absorb feature and pseudo-label level knowledge from RDT, thus improving NDS and mAP by 1.5\% and 3.5\%, respectively. 
As shown in $Ex_{2}$, the EMA updating only brings an improvement around 0.7\% NDS and 0.7\% mAP, which shows the effectiveness of our proposed UEMA in mitigating the significant domain gap.
By gradually incorporating more multi-space features ($Ex_{3} - Ex_{5}$) in the shared geometric space, our method will get a 2\% improvement in NDS, which demonstrates that it is crucial to jointly address all space domain shifts. 
When we combine RDT and GCS in BEVUDA++ ($Ex_{6}$), NDS reaches 30.5\% while mAP achieves 24.3\%. We can come to the conclusion that NDS and mAP continuously increase with the addition of each component in RDT and GCS, demonstrating that each of these modules is necessary and effective. It also proves that RDT and GCS can jointly address the domain shift accumulation.

\begin{figure*}[t]
\includegraphics[width=0.95\textwidth]{./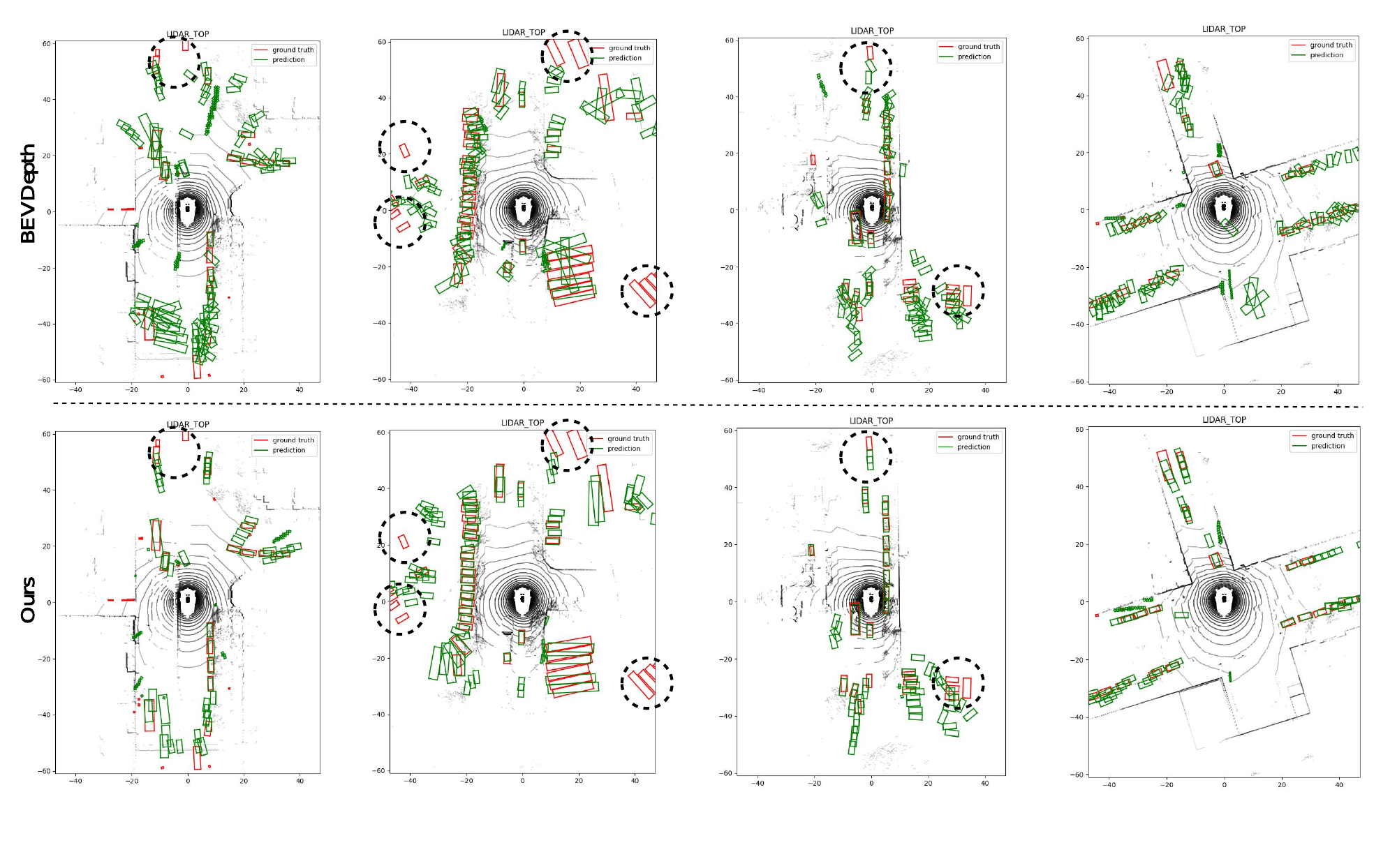}
\centering
\caption{Visualization results of 3D object detection on the nuScenes val set without NMS. Our proposed BEVUDA++ (bottom row) achieves more accurate and more certain predicted results than the backbone model (top row) while highlighting a few instances of failure cases.}
\label{fig:LiDAR}
        \vspace{-0.3cm}
\end{figure*}

\begin{table}[t]
      \begin{center}
        \caption{Ablation studies on the Sunny to Rainy scenario. DAT consists of three components: depth-aware information(DA), uncertainty-guided EMA(UEMA), and multi-latent space knowledge transfer(KT). For GAS, the geometric embedding space can be constructed by Bev(BA), image(IA), and voxel(VA) feature space.}
        \vspace{-0.3cm}

         \resizebox{\columnwidth}{!}{
        \setlength{\tabcolsep}{1mm}{
       	\begin{tabular}{c|ccc|ccc|cc}
       	\toprule
               	\toprule
     Name & DA & UEMA & KT& BA& IA & VA   &  \cellcolor{lightgray}NDS ↑&  \cellcolor{lightgray}mAP ↑ \\
        \midrule
        \midrule
        $Ex_{0}$  & - & - & - & - & - & -  & 0.268  & 0.196 \\
    
        \midrule
        $Ex_{1}$ & \Checkmark  & - & \Checkmark & - & - & - & 0.283  & 0.231 \\
       $Ex_{2}$ & \Checkmark & \Checkmark & \Checkmark  & - & - & - & 0.290  & 0.238 \\
       \midrule
       $Ex_{3}$ & - & - & -  & \Checkmark & - & -  &0.276  & 0.200\\
       $Ex_{4}$ & - & -  & - & \Checkmark & \Checkmark & -  & 0.282  & 0.204\\
       $Ex_{5}$ & - & -  & - & \Checkmark & \Checkmark & \Checkmark  & 0.288  & 0.207 \\
       \midrule
       $Ex_{6}$ & \Checkmark & \Checkmark  & \Checkmark & \Checkmark & \Checkmark & \Checkmark & \textbf{0.307}  & \textbf{0.245} \\
        		\bottomrule  
		\bottomrule  
		\end{tabular}}
  }
      \label{tab:abl}
      \end{center}
              \vspace{-0.3cm}
\end{table}

\noindent\textbf{Detailed ablation study of RDT.}
We study the effectiveness of depth-aware information composition and multi-geometric space knowledge transferring in RDT.
As shown in TABLE \ref{tab:abl_da}, only taking LiDAR ground truth to replace depth prediction ($Ex_{2-1}$) can improve 0.7\% NDS and 2.7\% mAP compared with $Ex_{0}$. The obviously increased mAP demonstrates that LiDAR data plays an important role in target domain-specific Voxel feature construction. However, due to the sparse property of LiDAR data, we utilize all dense depth predictions to composite sparse LiDAR. In ($Ex_{2-2}$),  NDS and mAP can achieve 27.9\% and 22.8\%, which only have limited improvement compared with $Ex_{2-1}$ since the original predictions contain noises. 
We further adopt traditional confidence scores to select reliable depth prediction. In ($Ex_{2-3}$), NDS and mAP can achieve 0.2\% and 0.1\% improvement compared with $Ex_{2-2}$, since confidence is trusting less in the pixel-wise cross-domain scenario and is not suitable to reflect the reliability of pixel-wise prediction. 
Therefore, we introduce uncertainty guidance to adaptively select more reliable and task-relevant depth predictions. $Ex_{2}$ has obvious performance progress compared with $Ex_{2-1}$ - $Ex_{2-3}$, demonstrating the uncertainty mechanism can reflect the reliability of depth prediction. It thus reduces the noises of depth information and can further ease the domain shift influence. 
As shown in TABLE \ref{tab:abl_kt}, transferring target domain knowledge in different geometric spaces can be beneficial to RDT. With pseudo label, BEV, voxel, and image feature transferred between RDT and student model, mAP is gradually improved from 19.6\% to 23.5\%. 
The improved performance proves that the transferred multi-space target domain knowledge is essential for the student model to decrease data distribution distance between two domains.

\begin{figure*}[t]
\includegraphics[width=0.95\textwidth]{./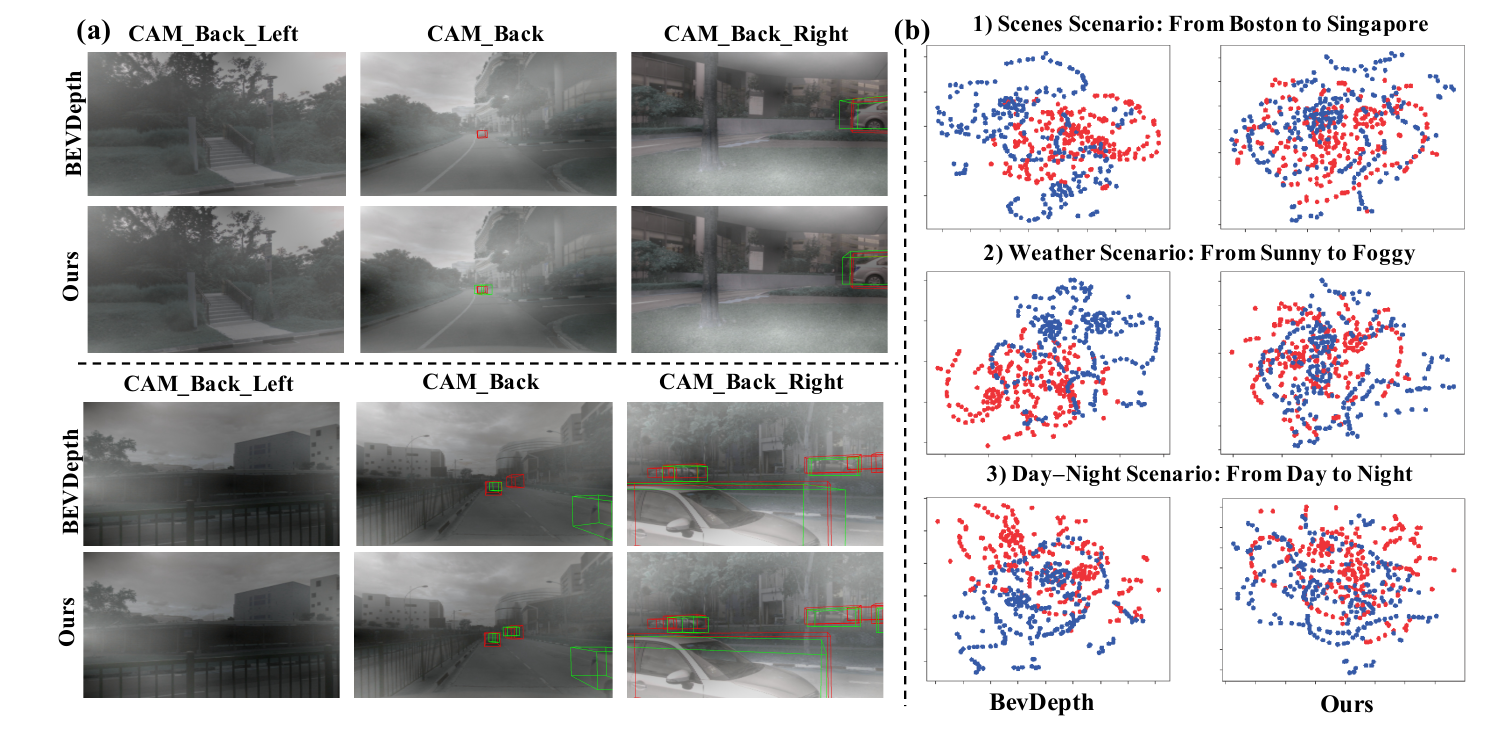}
\centering
\caption{Visualizations on the benefits of our proposed method. (a) Camera space results: The visualization of BevDepth~\cite{li2022bevdepth}(top row) and our proposed BEVUDA++(bottom row). The results are visualized on the weather adaptation scenario. (b) Visualization of feature distributions using T-SNE~\cite{van2013barnes}.}
\label{fig:vis}
        \vspace{-0.3cm}
\end{figure*}

\subsection{Qualitative analysis}
\label{sec:4.4}

We first visualize the depth estimation results on nuScenes val set in Fig. \ref{fig:depth}. Compared to base methods without uncertainty estimation, our proposed BEVUDA++ significantly enhances depth estimation performance, yielding sharper edges and more defined shapes. The integration of uncertainty estimation in BEVUDA++ aids in better distinguishing between areas of high and low confidence in-depth predictions, which enhances the reliability and visual clarity of the depth maps. These advancements in in-depth estimation contribute to substantially more precise 3D object detection.

Moreover, Fig. \ref{fig:LiDAR} illustrates the 3D detection results on the nuScenes val set under the Sunny to Rainy scenario. It is important to note that in order to more effectively showcase the efficacy of our proposed methods, we have opted not to perform Non-Maximum Suppression (NMS). Observations reveal that our BEVUDA++ not only secures higher IoU but also delivers more reliable predictions with fewer false positives. These improvements are attributed to the uncertainty estimation for reliable depth predictions and further refine the robustness of the EMA process.

We further present some visualization results of the camera space, as shown in Fig. \ref{fig:vis} (a). It is quite clear that the BEVDpeth fails to locate the objects well, while BEVUDA++ yields more accurate localization results as its prediction overlaps better with the ground truth. We can also observe that BEVUDA++ can detect objects that baseline ignores, demonstrating the superiority of BEVUDA++ in object detection and presenting great potential in deploying to real-world autonomous driving applications. The visualization in Fig. \ref{fig:vis} (b) further verifies the explicit cross-domain ability of BEVUDA++. As a clear separation can be seen in the clusters of the blue-source and red-target dots produced by BEVDepth, the features generated by BEVUDA++ get closer distribution, further demonstrating the ability of our proposed method in addressing domain shift accumulation. In conclusion, these results present the great potential of BEVUDA++ in deploying to real-world autonomous driving applications.

\begin{table}[t]
      \begin{center}
        \caption{The ablation study on the effectiveness of each component in depth-aware information (DAT). Pred means directly utilizing depth prediction, and Con and UG mean adaptive confidence- and uncertainty-guided depth selection, respectively.}
        \vspace{-0.3cm}
         \resizebox{\columnwidth}{!}{
        \setlength{\tabcolsep}{1.2mm}{
       	\begin{tabular}{c|cccc|cc}
       	\toprule
               	\toprule
     Depth-aware: &  LiDAR & Pred & Con & UG &  \cellcolor{lightgray}NDS ↑&  \cellcolor{lightgray}mAP ↑ \\
        \midrule
        \midrule
        $Ex_{2-1}$& \Checkmark  & - & - & - &  0.275  & 0.223 \\
        $Ex_{2-2}$ & \Checkmark  &\Checkmark & -  & -  &  0.278  & 0.228 \\
        $Ex_{2-3}$ & \Checkmark  &\Checkmark &\Checkmark & -  &  0.280  & 0.229 \\
        $Ex_{2}$ & \Checkmark &\Checkmark & - & \Checkmark  & \textbf{0.290}  & \textbf{0.238} \\
        \bottomrule
                \bottomrule
		\end{tabular}}
  }
      \label{tab:abl_da}
      \end{center}
              \vspace{-0.5cm}
\end{table}

\vspace{-0.5cm}
\section{Analysis of failure modes} Based on the previous quantitative and qualitative analyses, it is evident that our proposed method substantially improves the performance of backbone models in UDA scenarios. However, as demonstrated in Fig. \ref{fig:LiDAR}, it still encounters significant challenges. Notably, the detection of heavily occluded 3D objects, particularly those distant from the center of the field of view as indicated by the black circles, proves difficult. Furthermore, the domain shift from day to night continues to cause significant performance degradation, attributable to the pronounced domain gap. This analysis exposes two primary failure modes: occlusion-related errors and inadequate adaptation to varying illumination conditions. Recent methods such as~\cite{chang2023bev,huang2023tri} focus on predicting 3D occupancy instead of bounding boxes and leverage V2X techniques to offer potential solutions to mitigate RGB domain shift effects. Additionally, the integration of spike cameras~\cite{liu2024unsupervised} could further reduce the domain gap in low-light conditions, suggesting a viable direction for future research to enhance detection robustness across diverse operational environments.

\begin{table}[t]
      \begin{center}
        \caption{The ablation study on the effectiveness of each component in Multi-latent space Knowledge Transfer. PL means transferring instance-level pseudo labels. BEV, Voxel, and Image stand for transferring on corresponding latent spaces.}
        \vspace{-0.3cm}
         \resizebox{\columnwidth}{!}{
        \setlength{\tabcolsep}{1mm}{
      	\begin{tabular}{c|cccc|cc}
      	\toprule
             	\toprule
     Latent Space: & PL & BEV & Voxel & Image &  \cellcolor{lightgray}NDS ↑&  \cellcolor{lightgray}mAP ↑ \\
        \midrule
        \midrule
        $Ex_{2-4}$& \Checkmark& - & - & -  & 0.280  & 0.213 \\
        $Ex_{2-5}$& \Checkmark&  \Checkmark & - & -  & 0.284 & 0.223 \\
        $Ex_{2-6}$& \Checkmark & \Checkmark &\Checkmark& -  & 0.286 & 0.231 \\
         $Ex_{2}$ & \Checkmark  & \Checkmark & \Checkmark&\Checkmark  & \textbf{0.290}  & \textbf{0.238} \\
		\bottomrule  
  		\bottomrule  
		\end{tabular}}
  }
      \label{tab:abl_kt}
      \end{center}
              \vspace{-0.5cm}
\end{table}

\section{Conclusion}
We introduce the innovative geometric-aware teacher-student framework, BEVUDA++, to mitigate domain shift accumulation in LSS-based BEV perception effectively. The RDT employs depth cues to craft dependable pseudo labels and imparts multi-spatial knowledge from the target domain to the student model. The GCS leverages this combined domain knowledge to narrow the distribution gap between domains. Together, RDT and GCS tackle domain shift accumulation with Uncertainty-guided EMA, propelling BEVUDA++ to SOTA achievements across three UDA challenges and a dynamic domain adaptation scenario. Despite the increased computational overhead during training due to the teacher-student setup, the student model maintains the computational efficiency during the inference period. 

{\small
\bibliographystyle{ieee_fullname}
\bibliography{egbib}

\begin{thebibliography}{10}\itemsep=-1pt

\bibitem{acuna2021towards}
David Acuna, Jonah Philion, and Sanja Fidler.
\newblock Towards optimal strategies for training self-driving perception models in simulation.
\newblock {\em Advances in Neural Information Processing Systems}, 34:1686--1699, 2021.

\bibitem{arnold2019survey}
Eduardo Arnold, Omar~Y Al-Jarrah, Mehrdad Dianati, Saber Fallah, David Oxtoby, and Alex Mouzakitis.
\newblock A survey on 3d object detection methods for autonomous driving applications.
\newblock {\em IEEE Transactions on Intelligent Transportation Systems}, 20(10):3782--3795, 2019.

\bibitem{barabanau2019monocular}
Ivan Barabanau, Alexey Artemov, Evgeny Burnaev, and Vyacheslav Murashkin.
\newblock Monocular 3d object detection via geometric reasoning on keypoints.
\newblock {\em arXiv preprint arXiv:1905.05618}, 2019.

\bibitem{barrera2021cycle}
Alejandro Barrera, Jorge Beltr{\'a}n, Carlos Guindel, Jose~Antonio Iglesias, and Fernando Garc{\'\i}a.
\newblock Cycle and semantic consistent adversarial domain adaptation for reducing simulation-to-real domain shift in lidar bird's eye view.
\newblock In {\em 2021 IEEE International Intelligent Transportation Systems Conference (ITSC)}, pages 3081--3086. IEEE, 2021.

\bibitem{ben2010theory}
Shai Ben-David, John Blitzer, Koby Crammer, Alex Kulesza, Fernando Pereira, and Jennifer~Wortman Vaughan.
\newblock A theory of learning from different domains.
\newblock {\em Machine learning}, 79(1):151--175, 2010.

\bibitem{ben2006analysis}
Shai Ben-David, John Blitzer, Koby Crammer, and Fernando Pereira.
\newblock Analysis of representations for domain adaptation.
\newblock {\em Advances in neural information processing systems}, 19, 2006.

\bibitem{blei2017variational}
David~M Blei, Alp Kucukelbir, and Jon~D McAuliffe.
\newblock Variational inference: A review for statisticians.
\newblock {\em Journal of the American statistical Association}, 112(518):859--877, 2017.

\bibitem{brazil2019m3d}
Garrick Brazil and Xiaoming Liu.
\newblock M3d-rpn: Monocular 3d region proposal network for object detection.
\newblock In {\em Proceedings of the IEEE/CVF International Conference on Computer Vision}, pages 9287--9296, 2019.

\bibitem{caesar2020nuScenes}
Holger Caesar, Varun Bankiti, Alex~H Lang, Sourabh Vora, Venice~Erin Liong, Qiang Xu, Anush Krishnan, Yu Pan, Giancarlo Baldan, and Oscar Beijbom.
\newblock nuscenes: A multimodal dataset for autonomous driving.
\newblock In {\em Proceedings of the IEEE/CVF conference on computer vision and pattern recognition}, pages 11621--11631, 2020.

\bibitem{cai2019exploring}
Qi Cai, Yingwei Pan, Chong-Wah Ngo, Xinmei Tian, Lingyu Duan, and Ting Yao.
\newblock Exploring object relation in mean teacher for cross-domain detection.
\newblock In {\em Proceedings of the IEEE/CVF Conference on Computer Vision and Pattern Recognition}, pages 11457--11466, 2019.

\bibitem{cai2020monocular}
Yingjie Cai, Buyu Li, Zeyu Jiao, Hongsheng Li, Xingyu Zeng, and Xiaogang Wang.
\newblock Monocular 3d object detection with decoupled structured polygon estimation and height-guided depth estimation.
\newblock In {\em Proceedings of the AAAI Conference on Artificial Intelligence}, volume~34, pages 10478--10485, 2020.

\bibitem{carion2020end}
Nicolas Carion, Francisco Massa, Gabriel Synnaeve, Nicolas Usunier, Alexander Kirillov, and Sergey Zagoruyko.
\newblock End-to-end object detection with transformers.
\newblock In {\em European conference on computer vision}, pages 213--229. Springer, 2020.

\bibitem{chang2023bev}
Cheng Chang, Jiawei Zhang, Kunpeng Zhang, Wenqin Zhong, Xinyu Peng, Shen Li, and Li Li.
\newblock Bev-v2x: Cooperative birds-eye-view fusion and grid occupancy prediction via v2x-based data sharing.
\newblock {\em IEEE Transactions on Intelligent Vehicles}, 2023.

\bibitem{chen2022polar}
Shaoyu Chen, Xinggang Wang, Tianheng Cheng, Qian Zhang, Chang Huang, and Wenyu Liu.
\newblock Polar parametrization for vision-based surround-view 3d detection.
\newblock {\em arXiv preprint arXiv:2206.10965}, 2022.

\bibitem{chen2016monocular}
Xiaozhi Chen, Kaustav Kundu, Ziyu Zhang, Huimin Ma, Sanja Fidler, and Raquel Urtasun.
\newblock Monocular 3d object detection for autonomous driving.
\newblock In {\em Proceedings of the IEEE conference on computer vision and pattern recognition}, pages 2147--2156, 2016.

\bibitem{chen2017multi}
Xiaozhi Chen, Huimin Ma, Ji Wan, Bo Li, and Tian Xia.
\newblock Multi-view 3d object detection network for autonomous driving.
\newblock In {\em Proceedings of the IEEE conference on Computer Vision and Pattern Recognition}, pages 1907--1915, 2017.

\bibitem{chen2018domain}
Yuhua Chen, Wen Li, Christos Sakaridis, Dengxin Dai, and Luc Van~Gool.
\newblock Domain adaptive faster r-cnn for object detection in the wild.
\newblock In {\em Proceedings of the IEEE conference on computer vision and pattern recognition}, pages 3339--3348, 2018.

\bibitem{ding2020learning}
Mingyu Ding, Yuqi Huo, Hongwei Yi, Zhe Wang, Jianping Shi, Zhiwu Lu, and Ping Luo.
\newblock Learning depth-guided convolutions for monocular 3d object detection.
\newblock In {\em Proceedings of the IEEE/CVF Conference on Computer Vision and Pattern Recognition Workshops}, pages 1000--1001, 2020.

\bibitem{fan2023hcpvf}
Baojie Fan, Kexin Zhang, and Jiandong Tian.
\newblock Hcpvf: Hierarchical cascaded point-voxel fusion for 3d object detection.
\newblock {\em IEEE Transactions on Circuits and Systems for Video Technology}, 2023.

\bibitem{fang2024cross}
Naiyu Fang, Lemiao Qiu, Shuyou Zhang, Zili Wang, Kerui Hu, and Kang Wang.
\newblock A cross-scale hierarchical transformer with correspondence-augmented attention for inferring bird’s-eye-view semantic segmentation.
\newblock {\em IEEE Transactions on Intelligent Transportation Systems}, 2024.

\bibitem{gal2016dropout}
Yarin Gal and Zoubin Ghahramani.
\newblock Dropout as a bayesian approximation: Representing model uncertainty in deep learning.
\newblock In {\em international conference on machine learning}, pages 1050--1059. PMLR, 2016.

\bibitem{ganin2015unsupervised}
Yaroslav Ganin and Victor Lempitsky.
\newblock Unsupervised domain adaptation by backpropagation.
\newblock In {\em International conference on machine learning}, pages 1180--1189. PMLR, 2015.

\bibitem{ganin2016domain}
Yaroslav Ganin, Evgeniya Ustinova, Hana Ajakan, Pascal Germain, Hugo Larochelle, Fran{\c{c}}ois Laviolette, Mario Marchand, and Victor Lempitsky.
\newblock Domain-adversarial training of neural networks.
\newblock {\em The journal of machine learning research}, 17(1):2096--2030, 2016.

\bibitem{geiger2012we}
Andreas Geiger, Philip Lenz, and Raquel Urtasun.
\newblock Are we ready for autonomous driving? the kitti vision benchmark suite.
\newblock In {\em 2012 IEEE conference on computer vision and pattern recognition}, pages 3354--3361. IEEE, 2012.

\bibitem{he2016deep}
Kaiming He, Xiangyu Zhang, Shaoqing Ren, and Jian Sun.
\newblock Deep residual learning for image recognition.
\newblock In {\em Proceedings of the IEEE conference on computer vision and pattern recognition}, pages 770--778, 2016.

\bibitem{huang2022bevdet4d}
Junjie Huang and Guan Huang.
\newblock Bevdet4d: Exploit temporal cues in multi-camera 3d object detection.
\newblock {\em arXiv preprint arXiv:2203.17054}, 2022.

\bibitem{huang2021bevdet}
Junjie Huang, Guan Huang, Zheng Zhu, and Dalong Du.
\newblock Bevdet: High-performance multi-camera 3d object detection in bird-eye-view.
\newblock {\em arXiv preprint arXiv:2112.11790}, 2021.

\bibitem{huang2022monodtr}
Kuan-Chih Huang, Tsung-Han Wu, Hung-Ting Su, and Winston~H Hsu.
\newblock Monodtr: Monocular 3d object detection with depth-aware transformer.
\newblock In {\em Proceedings of the IEEE/CVF Conference on Computer Vision and Pattern Recognition}, pages 4012--4021, 2022.

\bibitem{huang2023tri}
Yuanhui Huang, Wenzhao Zheng, Yunpeng Zhang, Jie Zhou, and Jiwen Lu.
\newblock Tri-perspective view for vision-based 3d semantic occupancy prediction.
\newblock In {\em Proceedings of the IEEE/CVF conference on computer vision and pattern recognition}, pages 9223--9232, 2023.

\bibitem{jiang2022polarformer}
Yanqin Jiang, Li Zhang, Zhenwei Miao, Xiatian Zhu, Jin Gao, Weiming Hu, and Yu-Gang Jiang.
\newblock Polarformer: Multi-camera 3d object detection with polar transformers.
\newblock {\em arXiv preprint arXiv:2206.15398}, 2022.

\bibitem{kendall2017uncertainties}
Alex Kendall and Yarin Gal.
\newblock What uncertainties do we need in bayesian deep learning for computer vision?
\newblock {\em Advances in neural information processing systems}, 30, 2017.

\bibitem{li2022bevstereo}
Yinhao Li, Han Bao, Zheng Ge, Jinrong Yang, Jianjian Sun, and Zeming Li.
\newblock Bevstereo: Enhancing depth estimation in multi-view 3d object detection with dynamic temporal stereo.
\newblock {\em arXiv preprint arXiv:2209.10248}, 2022.

\bibitem{li2022unifying}
Yanwei Li, Yilun Chen, Xiaojuan Qi, Zeming Li, Jian Sun, and Jiaya Jia.
\newblock Unifying voxel-based representation with transformer for 3d object detection.
\newblock {\em arXiv preprint arXiv:2206.00630}, 2022.

\bibitem{li2022bevdepth}
Yinhao Li, Zheng Ge, Guanyi Yu, Jinrong Yang, Zengran Wang, Yukang Shi, Jianjian Sun, and Zeming Li.
\newblock Bevdepth: Acquisition of reliable depth for multi-view 3d object detection.
\newblock {\em arXiv preprint arXiv:2206.10092}, 2022.

\bibitem{li2022towards}
Zhenyu Li, Zehui Chen, Ang Li, Liangji Fang, Qinhong Jiang, Xianming Liu, and Junjun Jiang.
\newblock Towards model generalization for monocular 3d object detection.
\newblock {\em arXiv preprint arXiv:2205.11664}, 2022.

\bibitem{li2022unsupervised}
Zhenyu Li, Zehui Chen, Ang Li, Liangji Fang, Qinhong Jiang, Xianming Liu, and Junjun Jiang.
\newblock Unsupervised domain adaptation for monocular 3d object detection via self-training.
\newblock {\em arXiv preprint arXiv:2204.11590}, 2022.

\bibitem{li2022diversity}
Zhuoling Li, Zhan Qu, Yang Zhou, Jianzhuang Liu, Haoqian Wang, and Lihui Jiang.
\newblock Diversity matters: Fully exploiting depth clues for reliable monocular 3d object detection.
\newblock In {\em Proceedings of the IEEE/CVF Conference on Computer Vision and Pattern Recognition}, pages 2791--2800, 2022.

\bibitem{li2022bevformer}
Zhiqi Li, Wenhai Wang, Hongyang Li, Enze Xie, Chonghao Sima, Tong Lu, Qiao Yu, and Jifeng Dai.
\newblock Bevformer: Learning bird's-eye-view representation from multi-camera images via spatiotemporal transformers.
\newblock {\em arXiv preprint arXiv:2203.17270}, 2022.

\bibitem{liu2024unsupervised}
Jiaming Liu, Qizhe Zhang, Xiaoqi Li, Jianing Li, Guanqun Wang, Ming Lu, Tiejun Huang, and Shanghang Zhang.
\newblock Unsupervised spike depth estimation via cross-modality cross-domain knowledge transfer.
\newblock In {\em 2024 IEEE International Conference on Robotics and Automation (ICRA)}, pages 9109--9116. IEEE, 2024.

\bibitem{liu2024bevuda}
Jiaming Liu, Rongyu Zhang, Xiaoqi Li, Xiaowei Chi, Zehui Chen, Ming Lu, Yandong Guo, and Shanghang Zhang.
\newblock Bevuda: Multi-geometric space alignments for domain adaptive bev 3d object detection.
\newblock In {\em 2024 IEEE International Conference on Robotics and Automation (ICRA)}, pages 9487--9494. IEEE, 2024.

\bibitem{liu2022petr}
Yingfei Liu, Tiancai Wang, Xiangyu Zhang, and Jian Sun.
\newblock Petr: Position embedding transformation for multi-view 3d object detection.
\newblock {\em arXiv preprint arXiv:2203.05625}, 2022.

\bibitem{liu2022petrv2}
Yingfei Liu, Junjie Yan, Fan Jia, Shuailin Li, Qi Gao, Tiancai Wang, Xiangyu Zhang, and Jian Sun.
\newblock Petrv2: A unified framework for 3d perception from multi-camera images.
\newblock {\em arXiv preprint arXiv:2206.01256}, 2022.

\bibitem{liu2021autoshape}
Zongdai Liu, Dingfu Zhou, Feixiang Lu, Jin Fang, and Liangjun Zhang.
\newblock Autoshape: Real-time shape-aware monocular 3d object detection.
\newblock In {\em Proceedings of the IEEE/CVF International Conference on Computer Vision}, pages 15641--15650, 2021.

\bibitem{loshchilov2017decoupled}
Ilya Loshchilov and Frank Hutter.
\newblock Decoupled weight decay regularization.
\newblock {\em arXiv preprint arXiv:1711.05101}, 2017.

\bibitem{luo2021unsupervised}
Zhipeng Luo, Zhongang Cai, Changqing Zhou, Gongjie Zhang, Haiyu Zhao, Shuai Yi, Shijian Lu, Hongsheng Li, Shanghang Zhang, and Ziwei Liu.
\newblock Unsupervised domain adaptive 3d detection with multi-level consistency.
\newblock In {\em Proceedings of the IEEE/CVF International Conference on Computer Vision}, pages 8866--8875, 2021.

\bibitem{manhardt2019roi}
Fabian Manhardt, Wadim Kehl, and Adrien Gaidon.
\newblock Roi-10d: Monocular lifting of 2d detection to 6d pose and metric shape.
\newblock In {\em Proceedings of the IEEE/CVF Conference on Computer Vision and Pattern Recognition}, pages 2069--2078, 2019.

\bibitem{ng2020bev}
Mong~H Ng, Kaahan Radia, Jianfei Chen, Dequan Wang, Ionel Gog, and Joseph~E Gonzalez.
\newblock Bev-seg: Bird's eye view semantic segmentation using geometry and semantic point cloud.
\newblock {\em arXiv preprint arXiv:2006.11436}, 2020.

\bibitem{philion2020lift}
Jonah Philion and Sanja Fidler.
\newblock Lift, splat, shoot: Encoding images from arbitrary camera rigs by implicitly unprojecting to 3d.
\newblock In {\em European Conference on Computer Vision}, pages 194--210. Springer, 2020.

\bibitem{qiao2024local}
Renzhong Qiao, Hongbing Ji, Zhigang Zhu, and Wenbo Zhang.
\newblock Local-to-global semantic learning for multi-view 3d object detection from point cloud.
\newblock {\em IEEE Transactions on Circuits and Systems for Video Technology}, 2024.

\bibitem{rao2023monocular}
Zhongyu Rao, Hai Wang, Long Chen, Yubo Lian, Yilin Zhong, Ze Liu, and Yingfeng Cai.
\newblock Monocular road scene bird’s eye view prediction via big kernel-size encoder and spatial-channel transform module.
\newblock {\em IEEE Transactions on Intelligent Transportation Systems}, 24(7):7138--7148, 2023.

\bibitem{reading2021categorical}
Cody Reading, Ali Harakeh, Julia Chae, and Steven~L Waslander.
\newblock Categorical depth distribution network for monocular 3d object detection.
\newblock In {\em Proceedings of the IEEE/CVF Conference on Computer Vision and Pattern Recognition}, pages 8555--8564, 2021.

\bibitem{saito2019strong}
Kuniaki Saito, Yoshitaka Ushiku, Tatsuya Harada, and Kate Saenko.
\newblock Strong-weak distribution alignment for adaptive object detection.
\newblock In {\em Proceedings of the IEEE/CVF Conference on Computer Vision and Pattern Recognition}, pages 6956--6965, 2019.

\bibitem{sakaridis2018semantic}
Christos Sakaridis, Dengxin Dai, and Luc Van~Gool.
\newblock Semantic foggy scene understanding with synthetic data.
\newblock {\em International Journal of Computer Vision}, 126(9):973--992, 2018.

\bibitem{sakaridis2021acdc}
Christos Sakaridis, Dengxin Dai, and Luc Van~Gool.
\newblock Acdc: The adverse conditions dataset with correspondences for semantic driving scene understanding.
\newblock In {\em Proceedings of the IEEE/CVF International Conference on Computer Vision}, pages 10765--10775, 2021.

\bibitem{saleh2019domain}
Khaled Saleh, Ahmed Abobakr, Mohammed Attia, Julie Iskander, Darius Nahavandi, Mohammed Hossny, and Saeid Nahvandi.
\newblock Domain adaptation for vehicle detection from bird's eye view lidar point cloud data.
\newblock In {\em Proceedings of the IEEE/CVF International Conference on Computer Vision Workshops}, pages 0--0, 2019.

\bibitem{schrum2024maveric}
Mariah~L Schrum, Emily Sumner, Matthew~C Gombolay, and Andrew Best.
\newblock Maveric: A data-driven approach to personalized autonomous driving.
\newblock {\em IEEE Transactions on Robotics}, 2024.

\bibitem{simonelli2019disentangling}
Andrea Simonelli, Samuel~Rota Bulo, Lorenzo Porzi, Manuel L{\'o}pez-Antequera, and Peter Kontschieder.
\newblock Disentangling monocular 3d object detection.
\newblock In {\em Proceedings of the IEEE/CVF International Conference on Computer Vision}, pages 1991--1999, 2019.

\bibitem{sun2020scalability}
Pei Sun, Henrik Kretzschmar, Xerxes Dotiwalla, Aurelien Chouard, Vijaysai Patnaik, Paul Tsui, James Guo, Yin Zhou, Yuning Chai, Benjamin Caine, et~al.
\newblock Scalability in perception for autonomous driving: Waymo open dataset.
\newblock In {\em Proceedings of the IEEE/CVF conference on computer vision and pattern recognition}, pages 2446--2454, 2020.

\bibitem{tarvainen2017mean}
Antti Tarvainen and Harri Valpola.
\newblock Mean teachers are better role models: Weight-averaged consistency targets improve semi-supervised deep learning results.
\newblock {\em Advances in neural information processing systems}, 30, 2017.

\bibitem{tian2023dcl}
Qing Tian, Heyang Sun, Shun Peng, Yuhui Zheng, Jun Wan, and Zhen Lei.
\newblock Dcl: Dipolar confidence learning for source-free unsupervised domain adaptation.
\newblock {\em IEEE Transactions on Circuits and Systems for Video Technology}, 2023.

\bibitem{tian2022unsupervised}
Qing Tian, Yanan Zhu, Heyang Sun, Songcan Chen, and Hujun Yin.
\newblock Unsupervised domain adaptation through dynamically aligning both the feature and label spaces.
\newblock {\em IEEE Transactions on Circuits and Systems for Video Technology}, 32(12):8562--8573, 2022.

\bibitem{van2013barnes}
Laurens Van Der~Maaten.
\newblock Barnes-hut-sne.
\newblock {\em arXiv preprint arXiv:1301.3342}, 2013.

\bibitem{wang2024bevrefiner}
Binglu Wang, Haowen Zheng, Lei Zhang, Nian Liu, Rao~Muhammad Anwer, Hisham Cholakkal, Yongqiang Zhao, and Zhijun Li.
\newblock Bevrefiner: Improving 3d object detection in bird’s-eye-view via dual refinement.
\newblock {\em IEEE Transactions on Intelligent Transportation Systems}, 2024.

\bibitem{wang2024towards}
Jian Wang, Fan Li, Yi An, Xuchong Zhang, and Hongbin Sun.
\newblock Towards robust lidar-camera fusion in bev space via mutual deformable attention and temporal aggregation.
\newblock {\em IEEE Transactions on Circuits and Systems for Video Technology}, 2024.

\bibitem{Wangetal2022}
Qin Wang, Olga Fink, Luc~Van Gool, and Dengxin Dai.
\newblock Continual test-time domain adaptation.
\newblock {\em ArXiv}, abs/2203.13591, 2022.

\bibitem{wang2021fcos3d}
Tai Wang, Xinge Zhu, Jiangmiao Pang, and Dahua Lin.
\newblock Fcos3d: Fully convolutional one-stage monocular 3d object detection.
\newblock In {\em Proceedings of the IEEE/CVF International Conference on Computer Vision}, pages 913--922, 2021.

\bibitem{wang2021exploring}
Wen Wang, Yang Cao, Jing Zhang, Fengxiang He, Zheng-Jun Zha, Yonggang Wen, and Dacheng Tao.
\newblock Exploring sequence feature alignment for domain adaptive detection transformers.
\newblock In {\em Proceedings of the 29th ACM International Conference on Multimedia}, pages 1730--1738, 2021.

\bibitem{wang2022detr3d}
Yue Wang, Vitor~Campagnolo Guizilini, Tianyuan Zhang, Yilun Wang, Hang Zhao, and Justin Solomon.
\newblock Detr3d: 3d object detection from multi-view images via 3d-to-2d queries.
\newblock In {\em Conference on Robot Learning}, pages 180--191. PMLR, 2022.

\bibitem{xu2020exploring}
Chang-Dong Xu, Xing-Ran Zhao, Xin Jin, and Xiu-Shen Wei.
\newblock Exploring categorical regularization for domain adaptive object detection.
\newblock In {\em Proceedings of the IEEE/CVF Conference on Computer Vision and Pattern Recognition}, pages 11724--11733, 2020.

\bibitem{xu2020cross}
Minghao Xu, Hang Wang, Bingbing Ni, Qi Tian, and Wenjun Zhang.
\newblock Cross-domain detection via graph-induced prototype alignment.
\newblock In {\em Proceedings of the IEEE/CVF Conference on Computer Vision and Pattern Recognition}, pages 12355--12364, 2020.

\bibitem{yu2022cross}
Jinze Yu, Jiaming Liu, Xiaobao Wei, Haoyi Zhou, Yohei Nakata, Denis Gudovskiy, Tomoyuki Okuno, Jianxin Li, Kurt Keutzer, and Shanghang Zhang.
\newblock Cross-domain object detection with mean-teacher transformer.
\newblock {\em arXiv preprint arXiv:2205.01643}, 2022.

\bibitem{yu2022mttrans}
Jinze Yu, Jiaming Liu, Xiaobao Wei, Haoyi Zhou, Yohei Nakata, Denis Gudovskiy, Tomoyuki Okuno, Jianxin Li, Kurt Keutzer, and Shanghang Zhang.
\newblock Mttrans: Cross-domain object detection with mean teacher transformer.
\newblock In {\em European Conference on Computer Vision}, pages 629--645. Springer, 2022.

\bibitem{zhang2024vecaf}
Rongyu Zhang, Zefan Cai, Huanrui Yang, Zidong Liu, Denis Gudovskiy, Tomoyuki Okuno, Yohei Nakata, Kurt Keutzer, Baobao Chang, Yuan Du, et~al.
\newblock Vecaf: Vlm-empowered collaborative active finetuning with training objective awareness.
\newblock {\em arXiv preprint arXiv:2401.07853}, 2024.

\bibitem{zhang2024decomposing}
Rongyu Zhang, Aosong Cheng, Yulin Luo, Gaole Dai, Huanrui Yang, Jiaming Liu, Ran Xu, Li Du, Yuan Du, Yanbing Jiang, et~al.
\newblock Decomposing the neurons: Activation sparsity via mixture of experts for continual test time adaptation.
\newblock {\em arXiv preprint arXiv:2405.16486}, 2024.

\bibitem{zhang2023unimodal}
Rongyu Zhang, Xiaowei Chi, Guiliang Liu, Wenyi Zhang, Yuan Du, and Fangxin Wang.
\newblock Unimodal training-multimodal prediction: Cross-modal federated learning with hierarchical aggregation.
\newblock {\em arXiv preprint arXiv:2303.15486}, 2023.

\bibitem{zhang2024efficient}
Rongyu Zhang, Yulin Luo, Jiaming Liu, Huanrui Yang, Zhen Dong, Denis Gudovskiy, Tomoyuki Okuno, Yohei Nakata, Kurt Keutzer, Yuan Du, et~al.
\newblock Efficient deweahter mixture-of-experts with uncertainty-aware feature-wise linear modulation.
\newblock In {\em Proceedings of the AAAI Conference on Artificial Intelligence}, volume~38, pages 16812--16820, 2024.

\bibitem{zhang2021srdan}
Weichen Zhang, Wen Li, and Dong Xu.
\newblock Srdan: Scale-aware and range-aware domain adaptation network for cross-dataset 3d object detection.
\newblock In {\em Proceedings of the IEEE/CVF Conference on Computer Vision and Pattern Recognition}, pages 6769--6779, 2021.

\bibitem{zhang2021objects}
Yunpeng Zhang, Jiwen Lu, and Jie Zhou.
\newblock Objects are different: Flexible monocular 3d object detection.
\newblock In {\em Proceedings of the IEEE/CVF Conference on Computer Vision and Pattern Recognition}, pages 3289--3298, 2021.

\bibitem{zhao2020review}
Sicheng Zhao, Xiangyu Yue, Shanghang Zhang, Bo Li, Han Zhao, Bichen Wu, Ravi Krishna, Joseph~E Gonzalez, Alberto~L Sangiovanni-Vincentelli, Sanjit~A Seshia, et~al.
\newblock A review of single-source deep unsupervised visual domain adaptation.
\newblock {\em IEEE Transactions on Neural Networks and Learning Systems}, 2020.

\bibitem{zhu2019class}
Benjin Zhu, Zhengkai Jiang, Xiangxin Zhou, Zeming Li, and Gang Yu.
\newblock Class-balanced grouping and sampling for point cloud 3d object detection.
\newblock {\em arXiv preprint arXiv:1908.09492}, 2019.

\end{thebibliography}
}

\begin{IEEEbiography}[{\includegraphics[width=1in,height=1.25in,clip,keepaspectratio]{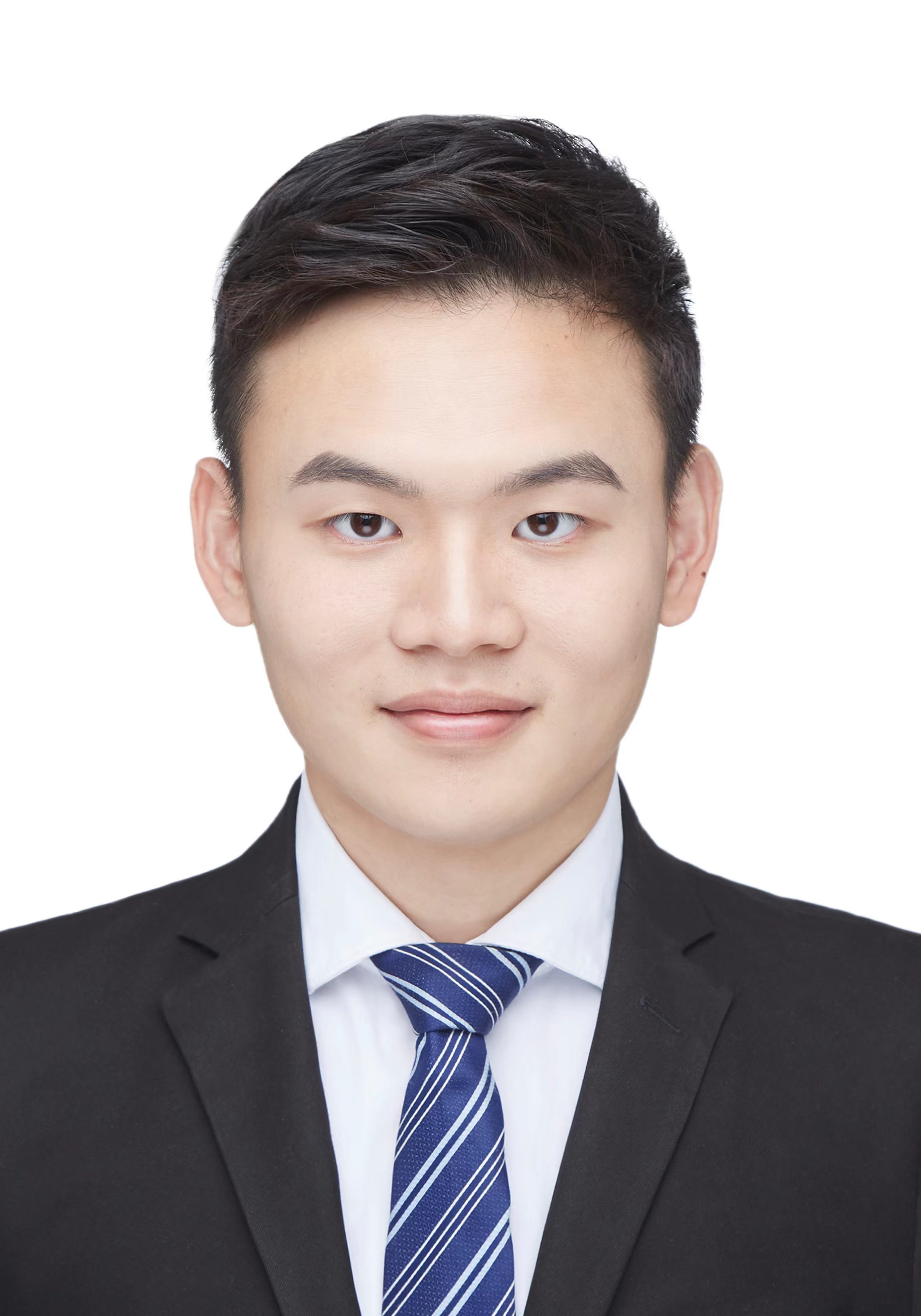}}]{Rongyu Zhang} is a dual Ph.D. student at Nanjing University and The Hong Kong Polytechnic University. He received M.Phil. degree at The Chinese University of Hong Kong, Shenzhen in 2023. He also received both B.E. degree and B.M. degree from the joint program of Beijing University of Posts and Telecommunications and Queen Mary University of London in 2021. His research interests include Federated Learning, Deep Learning, Efficient Learning for LLM and VLM.\end{IEEEbiography}

\vskip -2\baselineskip plus -1fil

\begin{IEEEbiography}[{\includegraphics[width=0.951in,height=1.15in,clip,keepaspectratio]{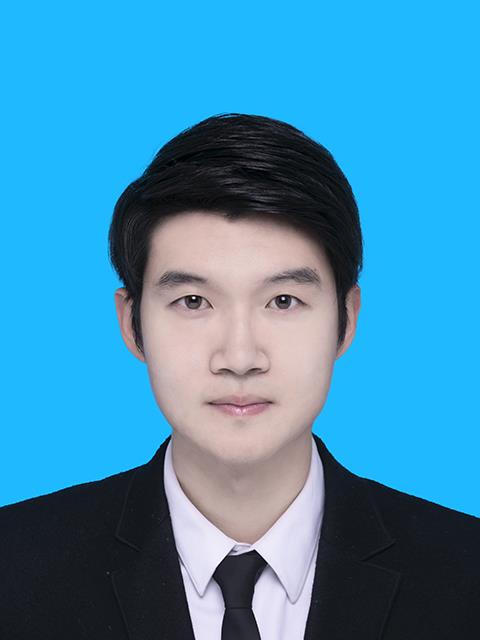}}]{Jiaming Liu}
 received the B.S. and M.S. degrees in Information and Communication Engineering from Beijing University of Posts and Telecommunications, China, in 2019 and 2022, receptively. Currently, he is pursuing his Ph.D. degree in Computer Science and Technology at Peking University. His research interests include out-of-distribution generalization, autonomous driving, and multimodal scene understanding. \end{IEEEbiography}

\vskip -2\baselineskip plus -1fil

\begin{IEEEbiography}[{\includegraphics[width=1.2in,height=1.3in,clip,keepaspectratio]{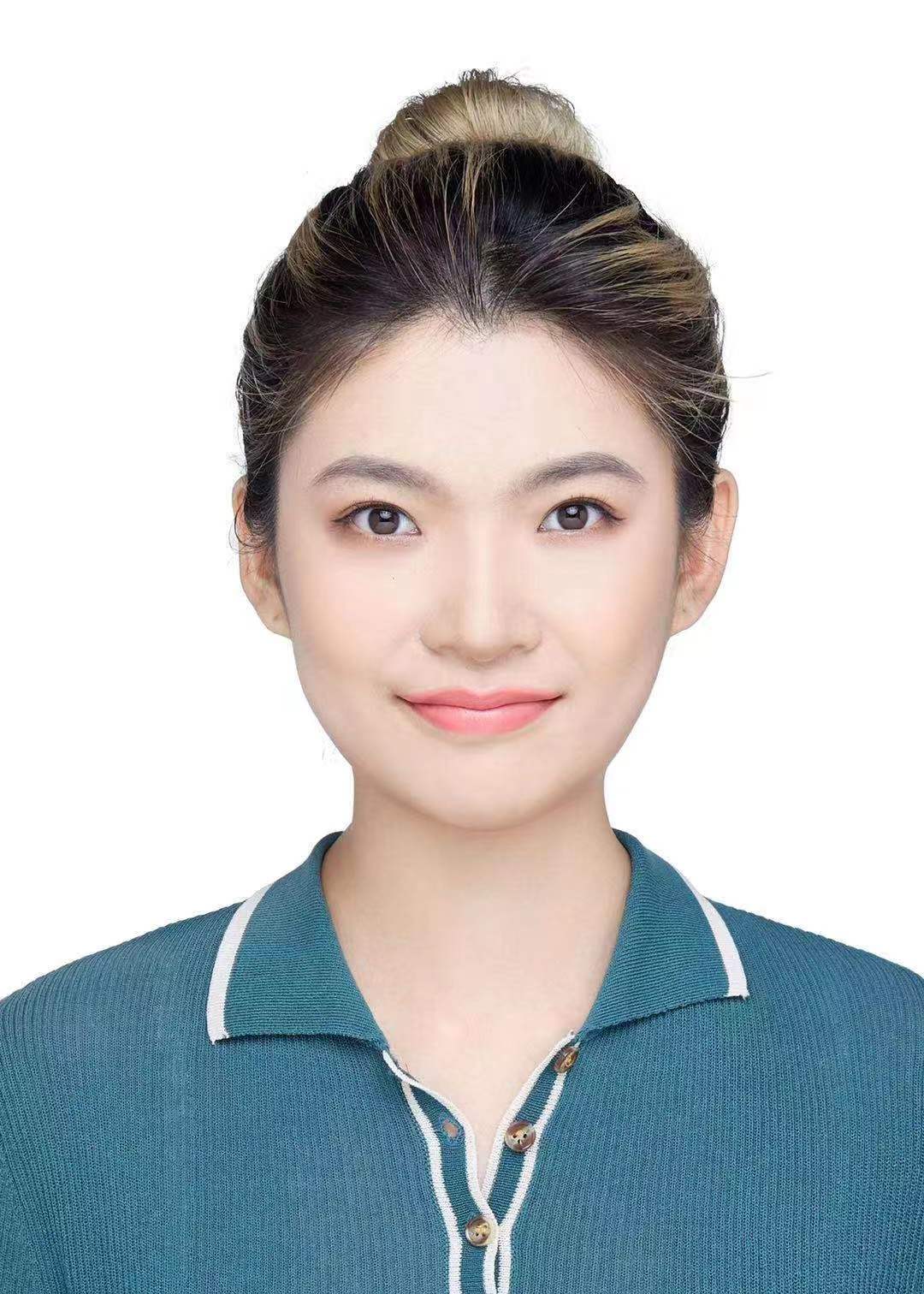}}]{Xiaoqi Li}
 received the B.S. degrees from Beijing University of Posts and Telecommunications, China, and the Master degree from Columbia University, USA. She is currently a first year phd student at Peking University. Her research interests focus on robot manipulation, embodied AI, and autonomous driving. \end{IEEEbiography}

\vskip -2\baselineskip plus -1fil
 
\begin{IEEEbiography}[{\includegraphics[width=1in,height=1.25in,clip,keepaspectratio]{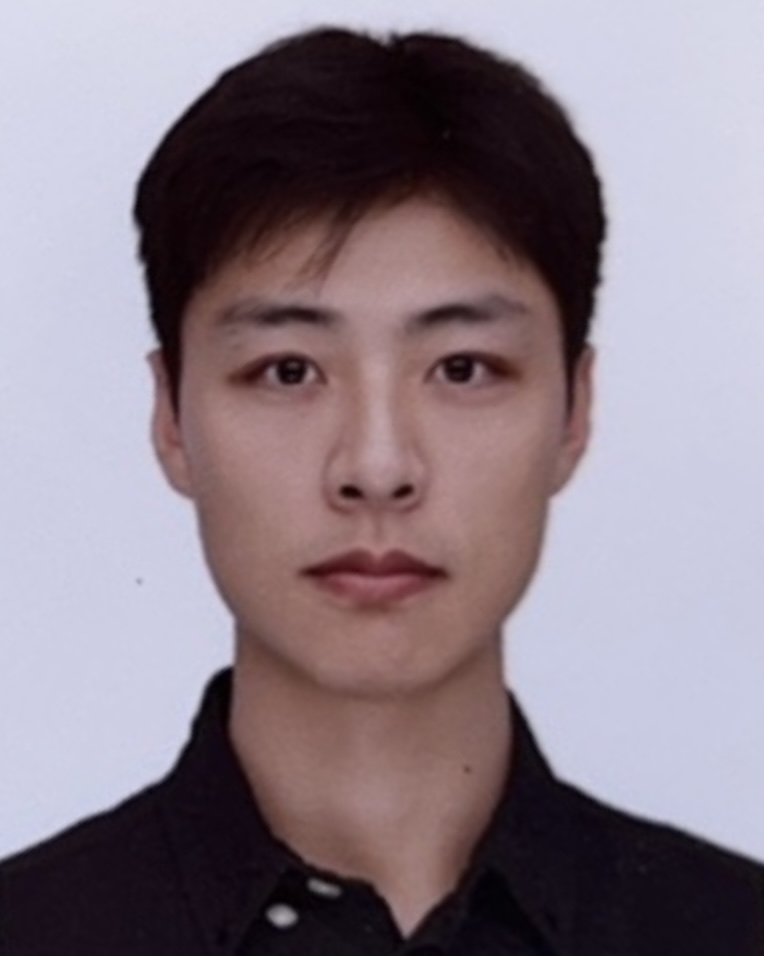}}]{Xiaowei Chi}
is a PhD student at Hong Kong University of Science and Technology, specializing in Robotics, Deep Learning, and Computer Vision. He received B.E. degree degree from the joint program of Beijing University of Posts and Telecommunications and Queen Mary University of London in 2021. With a passion for technology and its potential to transform society, Xiaowei's research focuses on developing algorithms for robotic perception and control using deep learning techniques.
\end{IEEEbiography}

\vskip -2\baselineskip plus -1fil

\begin{IEEEbiography}[{\includegraphics[width=1in,height=1.25in,clip,keepaspectratio]{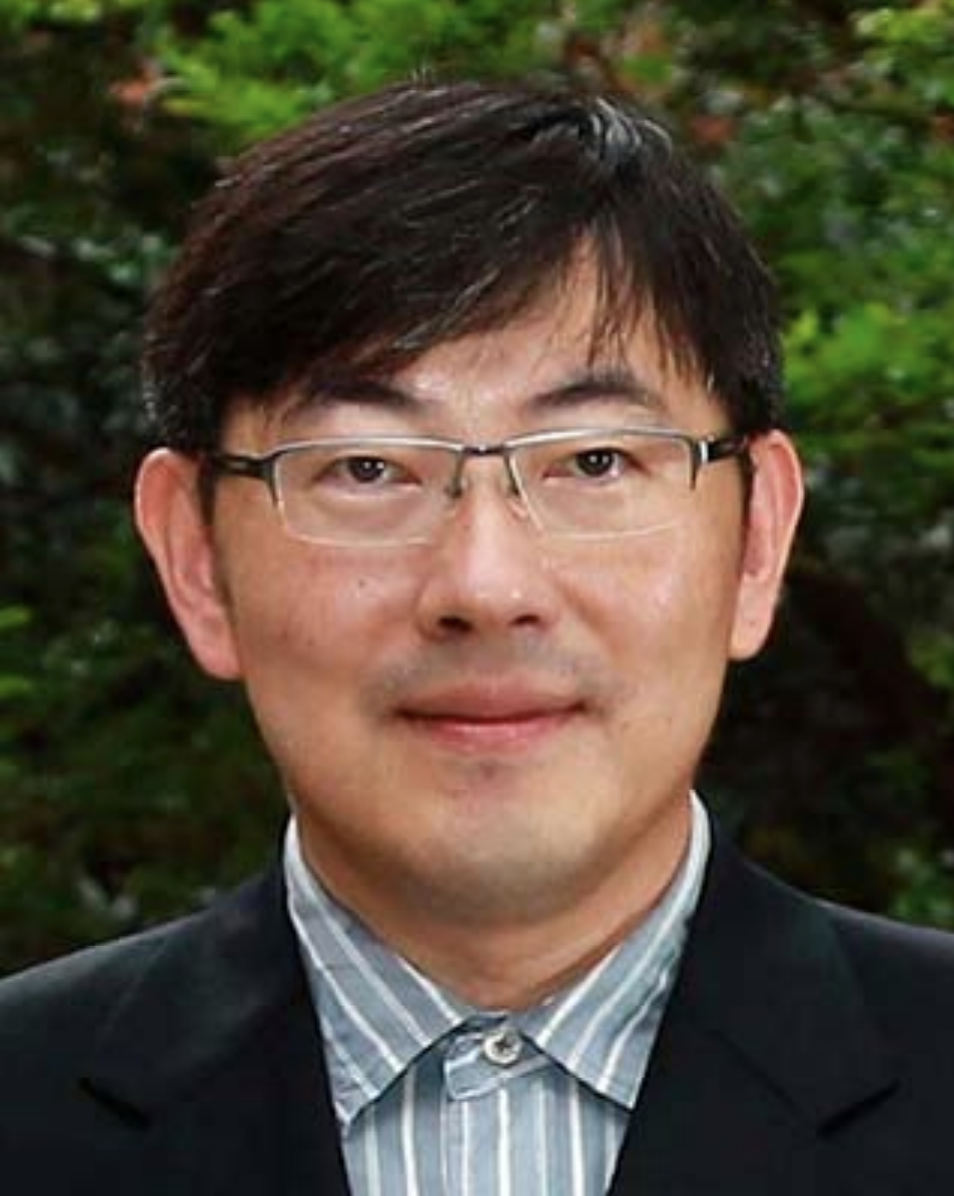}}]{Dan Wang} is a Professor of Department of Computing, The Hong Kong Polytechnic University. He received the Ph.D. degree from Simon Fraser University. His
research falls in general computer networking and systems, where he published in ACM SIGCOMM, ACM SIGMETRICS, and the IEEE INFOCOM, and many others. He is the Steering Committee Chair of IEEE/ACM IWQoS. His research interests include network architecture and QoS, smart building, and Industry 4.0.
\end{IEEEbiography}

\vskip -2\baselineskip plus -1fil

\begin{IEEEbiography}[{\includegraphics[width=0.951in,height=1.15in,clip,keepaspectratio]{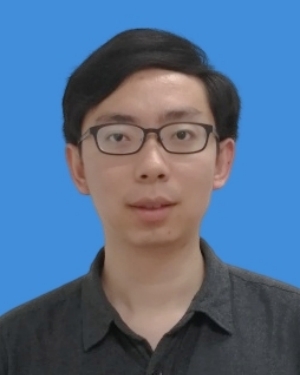}}]{Li Du}
(M’16) received his B.S degree from Southeast University, China and his Ph.D. degree in Electrical Engineering from University of California, Los Angeles.  Currently, he is an associate professor in the department of Electrical Science and Engineering at Nanjing University. His research includes analog sensing circuit design, in-memory computing design and high-performance AI processor for edge sensing.\end{IEEEbiography}

\vskip -2\baselineskip plus -1fil

\begin{IEEEbiography}[{\includegraphics[width=0.951in,height=1.15in,clip,keepaspectratio]{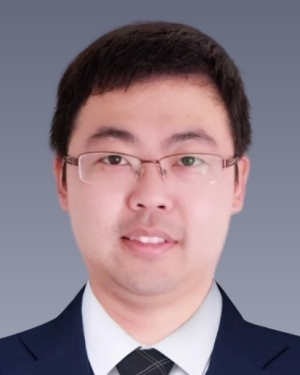}}]{Yuan Du}
(S’14-M’17-SM’21) received his B.S. degree from Southeast University (SEU), Nanjing, China, in 2009, his M.S. and his Ph.D. degree both from Electrical Engineering Department, University of California, Los Angeles (UCLA), in 2012 and 2016, respectively. Since 2019, he has been with Nanjing University, Nanjing, China, as an Associate Professor. His current research interests include designs of machine-learning hardware accelerators, high-speed inter-chip/intra-chip interconnects, and RFICs.
\end{IEEEbiography}

\vskip -2\baselineskip plus -1fil

\begin{IEEEbiography}[{\includegraphics[width=0.951in,height=1.15in,clip,keepaspectratio]{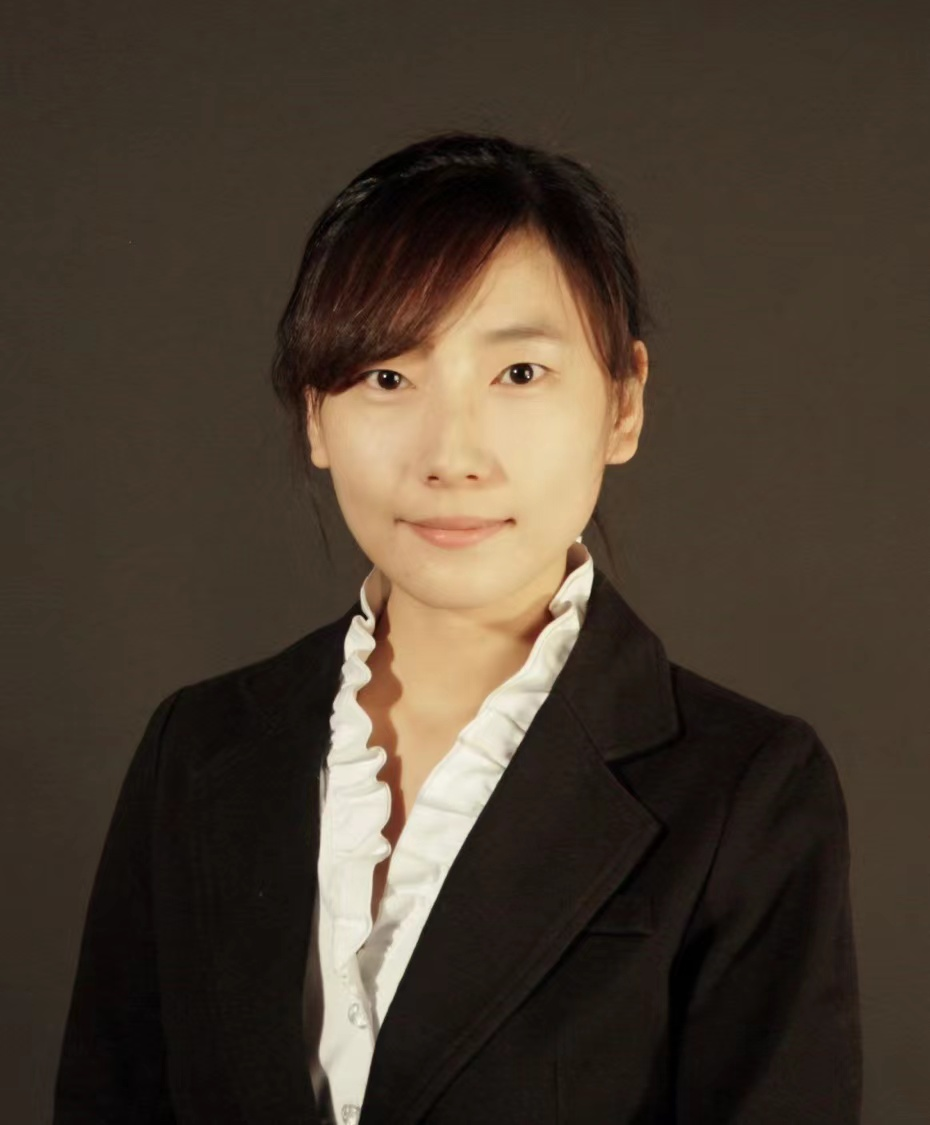}}]{Shanghang Zhang}
received her M.S. degree from Peking University, and her Ph.D. degree from Carnegie Mellon University in 2018. After that, she has been the postdoc research fellow at Berkeley AI Research Lab, UC Berkeley. Currently, she's an assistant professor at the School of Computer Science, Peking University. Her research focuses on machine learning generalization in the open world, including theory, algorithm, and system development, with applications to important IoT problems such as autonomous driving and robotics.\end{IEEEbiography}

\end{document}